%% file: main.tex
\documentclass{article}

\usepackage{amsmath}
\usepackage{amssymb}
\usepackage{mathtools}
\usepackage{amsthm}
\usepackage{thmtools}
\usepackage{thm-restate}
\usepackage{makecell}
\usepackage{pifont}

\usepackage{microtype}
\usepackage{graphicx}
\usepackage{subcaption}
\usepackage{booktabs} % for professional tables

\usepackage{multirow}
\usepackage[table,dvipsnames]{xcolor}   % provides \cellcolor (and loads colortbl)
\usepackage{xspace} 

\usepackage[pagebackref=true]{hyperref}

\input{notations.tex}

\PassOptionsToPackage{noend}{algorithmic}
\usepackage[accepted]{icml2026}

\usepackage{enumitem}

\usepackage[capitalize,noabbrev]{cleveref}

\def\ours{KnapSpec\xspace}
\def\MLP{{\mathtt{MLP}}}
\def\ATTN{{\mathtt{Attn}}}

\theoremstyle{plain}

\theoremstyle{definition}

\theoremstyle{remark}

\usepackage[textsize=tiny]{todonotes}

\newif\ifshowcomments
\showcommentstrue
\renewcommand{\algorithmiccomment}[1]{\hfill \texttt{\color{blue} $//$~{\small #1}}}

\icmltitlerunning{
KnapSpec: Self-Speculative Decoding via Adaptive Layer Selection as a Knapsack Problem
}
\begin{document}

\twocolumn[
  % \icmltitle{Adaptive Self-Speculation via Two-Stage Skipping Optimization}
  \icmltitle{
  KnapSpec: Self-Speculative Decoding \\via Adaptive Layer Selection as a Knapsack Problem
  }
  % Self-Speculative Decoding with Adaptive Layer Skip \\ via Direct Optimization for Speedup}
  % It is OKAY to include author information, even for blind submissions: the
  % style file will automatically remove it for you unless you've provided
  % the [accepted] option to the icml2026 package.

  % List of affiliations: The first argument should be a (short) identifier you
  % will use later to specify author affiliations Academic affiliations
  % should list Department, University, City, Region, Country Industry
  % affiliations should list Company, City, Region, Country

  % You can specify symbols, otherwise they are numbered in order. Ideally, you
  % should not use this facility. Affiliations will be numbered in order of
  % appearance and this is the preferred way.
  \icmlsetsymbol{equal}{*}

  \begin{icmlauthorlist}
    \icmlauthor{Seongjin Cha}{kaist,equal}
    \icmlauthor{Gyuwan Kim}{ucsb,equal}
    \icmlauthor{Dongsu Han}{kaist}
    \icmlauthor{Tao Yang}{ucsb}
    \icmlauthor{Insu Han}{kaist}
    % \icmlauthor{Firstname6 Lastname6}{sch,yyy,comp}
    % \icmlauthor{Firstname7 Lastname7}{comp}
    %\icmlauthor{}{sch}
    % \icmlauthor{Firstname8 Lastname8}{sch}
    % \icmlauthor{Firstname8 Lastname8}{yyy,comp}
    %\icmlauthor{}{sch}
    %\icmlauthor{}{sch}
  \end{icmlauthorlist}

  \icmlaffiliation{kaist}{School of Electrical Engineering, KAIST, Daejeon, South Korea}
  \icmlaffiliation{ucsb}{Department of Computer Science, University of California, Santa Barbara, CA, USA}
  % \icmlaffiliation{yyy}{Department of XXX, University of YYY, Location, Country}
  % \icmlaffiliation{comp}{Company Name, Location, Country}
  % \icmlaffiliation{sch}{School of ZZZ, Institute of WWW, Location, Country}

  \icmlcorrespondingauthor{Insu Han}{insu.han@kaist.ac.kr}
  % \icmlcorrespondingauthor{Firstname1 Lastname1}{first1.last1@xxx.edu}
  % \icmlcorrespondingauthor{Firstname2 Lastname2}{first2.last2@www.uk}

  % You may provide any keywords that you find helpful for describing your
  % paper; these are used to populate the "keywords" metadata in the PDF but
  % will not be shown in the document
  \icmlkeywords{Machine Learning, ICML}

  \vskip 0.3in
]

% this must go after the closing bracket ] following \twocolumn[ ...

% This command actually creates the footnote in the first column listing the
% affiliations and the copyright notice. The command takes one argument, which
% is text to display at the start of the footnote. The \icmlEqualContribution
% command is standard text for equal contribution. Remove it (just {}) if you
% do not need this facility.

% Use ONE of the following lines. DO NOT remove the command.
% If you have no special notice, KEEP empty braces:
% \printAffiliationsAndNotice{}  % no special notice (required even if empty)
% Or, if applicable, use the standard equal contribution text:
\printAffiliationsAndNotice{\icmlEqualContribution}

\begin{abstract}
% Self-speculative decoding (SSD) accelerates LLM inference by skipping layers to create a draft model, yet existing methods often rely on static heuristics that ignore the dynamic computational cost of attention in long-context scenarios. We propose \ours, a training-free framework that reformulates draft model selection as a knapsack problem to maximize tokens-per-time throughput. By decoupling Attention and MLP layers and modeling their hardware-specific latencies as functions of context length, \ours adaptively identifies optimal draft configurations on the fly. Furthermore, we provide the first rigorous theoretical analysis establishing cosine similarity as a mathematically sound proxy for the token acceptance rate. Our experiments on Qwen3 and Llama3 show that \ours consistently outperforms state-of-the-art SSD baselines, achieving up to 1.47$\times$ wall-clock speedup across various tasks. This plug-and-play approach ensures high-speed inference without compromising the target model's output distribution.
Self-speculative decoding (SSD) accelerates LLM inference by skipping layers to create an efficient draft model, yet existing methods often rely on static heuristics that ignore the dynamic computational overhead of attention in long-context scenarios. We propose KnapSpec, a training-free framework that reformulates draft model selection as a knapsack problem to maximize tokens-per-time throughput. By decoupling Attention and MLP layers and modeling their hardware-specific latencies as functions of context length, KnapSpec adaptively identifies optimal draft configurations on the fly via a parallel dynamic programming algorithm. Furthermore, we provide the first rigorous theoretical analysis establishing cosine similarity between hidden states as a mathematically sound proxy for the token acceptance rate. This foundation allows our method to maintain high drafting faithfulness while navigating the shifting bottlenecks of real-world hardware. Our experiments on Qwen3 and Llama3 demonstrate that KnapSpec consistently outperforms state-of-the-art SSD baselines, achieving up to 1.47$\times$ wall-clock speedup across various benchmarks. Our plug-and-play approach ensures high-speed inference for long sequences without requiring additional training or compromising the target model's output distribution. Our official implementation is available at 
\url{https://github.com/kaist-flexml-lab/knapspec}.
% Self-speculative decoding accelerates inference of autoregressive language models by predicting which computation steps can be safely skipped, but existing approaches treat module skipping as a static or heuristic process. We introduce a novel approach for Self-Speculative Decoding via adaptive layer selection as a knapsack problem (\ours), a method that optimizes which MLP and Attention layers to skip at each generation step through a novel two-stage framework. We first formulate the skip-selection problem as maximizing token-per-compute throughput under the constraint that attention cost grows with output length—a key observation omitted in prior work. We propose an efficient dynamic programming approximation that rapidly identifies near-optimal skip configurations for long-context inputs and generations. Our method then performs a second-stage selection to choose the globally optimal configuration. Experiments on reasoning tasks demonstrate consistent improvements over other self-speculative decoding methods achieving up to 1.47$\times$ speedup over standard autoregressive decoding with no quality degradation.
\end{abstract}

\input{01_introduction}

\section{Preliminaries} \label{sec:prelim}
For any vectors $x,y \in \mathbb{R}^d$, we use $\cos(x,y) := \frac{\inner{x,y}}{\norm{x}_2\norm{y}_2}$ and extend this to a matrix $X, Y \in \mathbb{R}^{n \times d}$ by denoting $\cos(X,Y)=\frac1n \sum_{i=1}^n \cos(X_{i,:}, Y_{i,:})$ where $X_{i,:}$ is the $i$-th row vector in $X$.
Throughout this paper, we consider Transformer~\cite{vaswani2017attention} architecture for both target and draft models. In particular, a target model is assumed as a Transformer with $L$ layers, where each layer consists of a self-attention module followed by a multilayer perceptron (MLP)\footnote{We include layer normalization to the corresponding Attention or MLP layer.}.
Let $f$ denote the forward mapping of the target model, which can be expressed as the composition:
$f = f_\MLP^{(L)} \circ f_{\ATTN}^{(L)} \circ \cdots \circ f_\MLP^{(1)} \circ f_\ATTN^{(1)}$
where $f_\MLP^{(i)}$ and $f_\ATTN^{(i)}$ represent the forward functions of the MLP and attention in the $i$-th layer, respectively. 
For simplicity, we flatten the sequence of layers by defining
$
f^{(2i-1)}:=f^{(i)}_{\ATTN},~~f^{(2i)}:= f^{(i)}_\MLP~~\text{ for }~ i \in [L].
$
Under this scheme, the target model is represented as a composition of $2L$ sequential layers. 
For any index subset $S \subseteq [2L]$, $f^{(S)}$ denotes the forward mapping of the sub-network formed by composing the layers in $S$ while preserving their original sequential order.

\subsection{Speculative Decoding}
Speculative Decoding (SD)~\citep{leviathan2023fast,chen2023accelerating} accelerates autoregressive decoding by using a fast draft model to generate candidate tokens, which a larger target model verifies in parallel. 
This approach reduces the generation latency by replacing multiple slow autoregressive calls with parallel verification, significantly lowering overall latency.

Formally, let ${\tilde{s}}_{t+1:t+\gamma} = (\tilde{s}_{t+1}, \dots, \tilde{s}_{t+\gamma})$ be $\gamma$ tokens proposed by a draft model $q$. The target model $p$ computes distributions for these candidates in parallel. A token $\tilde{s}_{t+i}$ is accepted if $u \le \min\left(1, \frac{p(\tilde{s}_{t+i} | {s}_{<t+i})}{q(\tilde{s}_{t+i} | {s}_{<t+i})}\right)$ for $u \sim U(0, 1)$ {in sampling and if $\argmax_{\tilde{s}_{t+i}} p(\tilde{s}_{t+i} | {s}_{<t+i}) = \argmax_{\tilde{s}_{t+i}} q(\tilde{s}_{t+i} | {s}_{<t+i})$ in greedy decoding}.

The efficiency of SD is governed by the acceptance rate as well as the cost of the draft model. 
The acceptance rate $\alpha\in[0,1]$ represents the probability that a draft token satisfies the verification criterion. For a fixed draft length $\gamma$, the number of accepted tokens follows a truncated geometric distribution. Hence, the expected number of tokens generated per speculative step is given by 
\begin{align}
    % \mathbb{E}[\text{\# Generated Tokens}] 
    \mathbb{E}[\text{Tokens per step}] 
    = \sum_{i=0}^\gamma \alpha^i = \frac{1-\alpha^{\gamma+1}}{1-\alpha}. \label{eq:expected_tokens}
\end{align}

\subsection{Self-Speculative Decoding with Layer Selection}

To guarantee the effective benefits of SD, a draft model must operate with low latency while maintaining sufficient alignment with the target model's predictions. This typically requires targeted training or fine-tuning since a draft model with low acceptance rates fails to provide a speedup and instead introduces verification overhead that degrades performance. 
Prior work commonly introduces a separate draft model~\cite{leviathan2023fast}, however, this approach is often suboptimal as maintaining two independent sets of model parameters substantially increases the memory footprint.

Self-Speculative Decoding (SSD) addresses this limitation by leveraging the redundancy of the target model. Rather than introducing an external draft model, SSD constructs the draft as a sub-network of the target model by strategically selecting a subset of its layers. 

In the SSD framework, the draft model is represented as $f^{(S)}$ for a properly chosen subset $S$. Increasing the size of $S$ generally improves the approximation quality to the target model, leading to higher acceptance rates of draft tokens, but incurs higher per-token latency.
Therefore, the objective of SSD is to find an index set $S$ that optimally balances draft model latency and token acceptance rate, thereby maximizing overall decoding speed. 
Recent advances in SSD configure sub-networks to serve as draft models and fall into two categories: early-exit and layer-skip strategies.

\begin{figure*}[t]
    \centering
    % \vspace{-0.02in}
    \includegraphics[width=\linewidth]{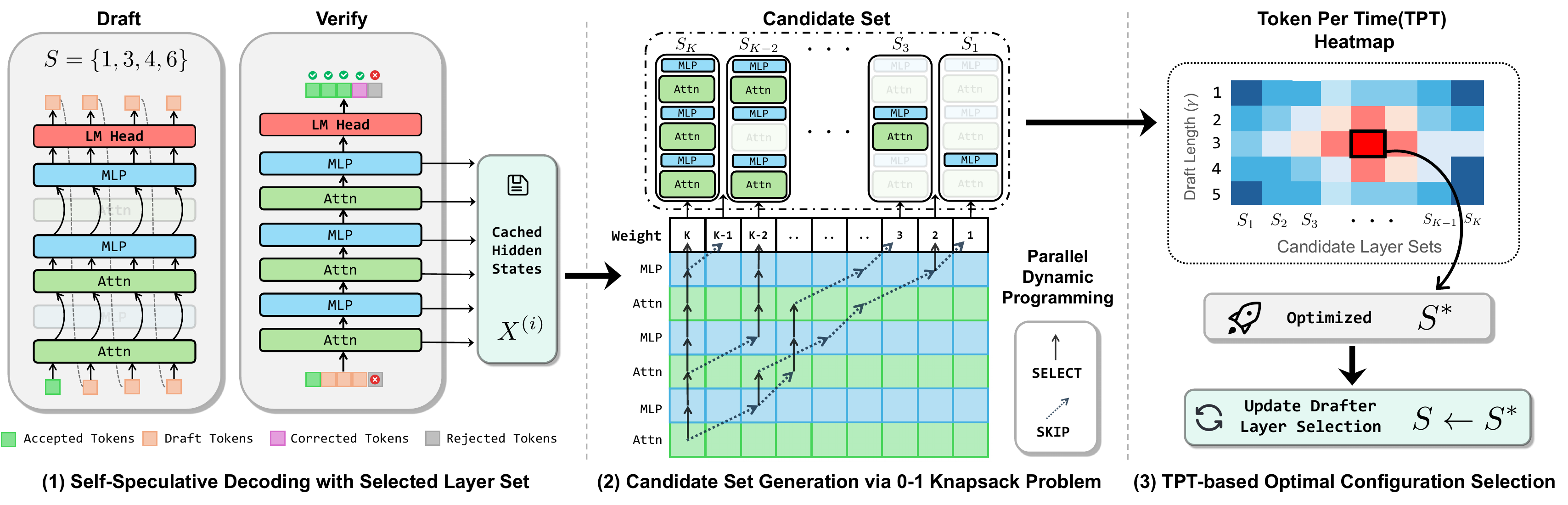}
    \vspace{-0.1in}
    \caption{Overview of KnapSpec. (1) Self-Speculative Decoding defines a draft model as a sub-network of the target model. (2) Layer selection can be converted to a Knapsack Problem, and we search candidate sets via Dynamic Programming (DP) with various latency budgets (weights). (3) Then, the optimal configuration that maximizes Tokens-per-Time (TPT) is selected and used for the next speculation.}
    \label{fig:knapspec}
    \vspace{-0.1in}
\end{figure*}

\paragraph{Early-Exit SSD.} 
A line of works such as DEL~\cite{zarch2025context} and LayerSkip~\cite{elhoushi2024layerskip}\footnote{Despite its name, LayerSkip primarily employs an early-exit strategy rather than skipping intermediate layers.} construct draft models by truncating the target model at some early layer $\ell < L$.
DEL~\cite{zarch2025context} proposes a dynamic early-exit strategy by optimizing the Tokens-per-Layer (TPL) metric, which measures the expected number of accepted tokens per loaded layer. Let $\alpha_\ell$ denote the acceptance rate at layer $\ell$ estimated from cached shadow tokens.
For an exit layer $\ell$ and speculation length $\gamma$, the TPL objective is defined as:
\begin{align}
\text{TPL}(\ell,\gamma) := \frac{1-\alpha_\ell^{\gamma+1}}{1-\alpha_\ell} \cdot \frac1{\gamma \ell + L}
\label{eq:TPL}
\end{align}
where the denominator $\gamma \ell + L$ represents the total number of layers operated including $\gamma$ drafting steps at layer $\ell$ and a single full verification pass. While early-exit methods reduce the search space to a linear prefix $S \in \{[2], [4], \dots, [2L] \}$ they inherently limit the potential for speedup by forcing the inclusion of all early layers, 
regardless of their individual utility.

\paragraph{Layer-Skip SSD.} 
In contrast, {\it layer-skip} methods formulate draft construction as a general layer selection problem.
This flexibility allows the draft model to have deeper or more informative layers, which are often crucial for SSD performance~\cite {zhang2024draft}. 
However, the corresponding search space grows exponentially with depth as $S \subseteq [2L]$, and hence the optimization problem becomes more challenging.
% hence a simple approximation would be required.
To address this, SWIFT~\cite{xia2024swift} applies Bayesian optimization over layer subsets based on observations on different combinations of selected layers.
Recently, CLaSp~\cite{chen2025clasp} proposes a layer-skip strategy that seeks to maximize the cosine similarity between the draft and target representations under a fixed budget $B$ on the number of layers to draft:
\begin{align} \label{eq:cos}
\max_{S \subseteq [2L]} \cos(f(x), f^{(S)}(x))
\quad \text{s.t.} \quad |S| = B.
\end{align}
Since the problem in \cref{eq:cos} is NP-hard, CLaSp employs a dynamic programming (DP) formulation to efficiently obtain a near-optimal layer subset.

\section{\ours: Layer Selection for SSD as a Knapsack Problem} \label{sec:main}

In this section, we introduce \ours, a novel layer selection framework for Self-Speculative Decoding (SSD).
\cref{fig:knapspec} illustrates an overview of \ours.
While prior SSD works optimize various proxy metrics, such as Tokens-per-Layer (TPL) or cosine similarity, these metrics often fail to align with actual wall-clock speedups.  %  of representations
\ours is designed to bridge this gap by directly aligning the optimization objective with decoding throughput.
By decoupling the Attention and MLP layers, we reformulate the selection task as a {\it 0/1 Knapsack Problem}.

\subsection{Motivation: Asymmetric Layer Latencies}
Previous SSD methods often simplify the search space by treating Transformer layers as inseparable units~\cite{zarch2025context, chen2025clasp} or assuming uniform latencies across all layers~\cite{xia2024swift}. 
However, such static assumptions fail to capture the latency asymmetry inherent in Transformer architectures, which becomes increasingly pronounced as the context length grows.
In practice, the latency of an Attention layer grows linearly with the context length $n$, i.e., $t_{\mathtt{Attn}} = \Theta(n)$, while MLP latency remains constant, i.e., $t_{\mathtt{MLP}} = \Theta(1)$.

This divergence implies that a draft configuration optimized for a short context may become inefficient as the computational bottleneck shifts from MLP to Attention during generation. 
Consequently, the optimal layer selection should not be a global or static configuration. 
Instead, it must be adaptively determined based on the current context length. 
\ours addresses this by incorporating pre-profiled, length-dependent latency as dynamic weights in our optimization framework. 
By doing so, \ours ensures that the drafting strategy is always tailored to the actual latency of the layers at the given context, maximizing throughput throughout the entire long-context inference process.

\subsection{Tokens-per-Time (TPT) Metric} \label{sec:tpt_metric}
To address the limitations of existing efficiency proxies that lack direct alignment with wall-clock speed, we introduce the Tokens-per-Time (TPT) metric.
TPT generalizes the Tokens-per-Layer (TPL) metric in \cref{eq:TPL} by explicitly accounting for the distinct, length-dependent latencies of Attention and MLP layers. 
Specifically, TPT represents the expected number of generated tokens per unit of wall-clock time:
\begin{align}
    \hspace{-0.075in}
    \text{TPT}(S, \gamma) = \frac{1-\alpha_S^{\gamma+1}}{1-\alpha_S} \cdot \frac{1}{ \gamma~t_{\mathtt{Draft}}(S) + t_{\mathtt{Target}}}
    \label{eq:TPT}
\end{align}
where $\alpha_S$ is the expected acceptance rate for the layer set $S$ and $\gamma$ is the draft length.
The right term represents the total latency required for a single speculation step, where $t_{\mathtt{Draft}}(S)$ is the drafting time and $t_{\mathtt{Attn}}$ is the verification time of the target model. These latencies are calculated as: 
\begin{equation}
\begin{aligned}
    t_{\mathtt{Draft}}(S) &= {n_\mathtt{Attn}}(S) \cdot t_{\mathtt{Attn}} + {n_\mathtt{MLP}}(S) \cdot t_{\mathtt{MLP}}, \\
    t_{\mathtt{Target}} &= L(t_{\mathtt{Attn}}+t_{\mathtt{MLP}})
    \nonumber
\end{aligned}
\end{equation}
where ${n_\mathtt{Attn}}(S)$ and $n_\mathtt{MLP}(S)$ denote the number of Attention and MLP layers in $S$, respectively.
We determine the coefficients $t_{\mathtt{Attn}}$ and $t_{\mathtt{MLP}}$ by measuring hardware-dependent latency during a one-time preprocessing step across varying context length and hardware specification. See \cref{sec:hardware_latency} in the Appendix for further details.

By replacing the number of layers with the actual execution time, TPT provides a more precise objective for maximizing throughput.
In scenarios where $t_{\mathtt{Attn}}$ and $t_{\mathtt{MLP}}$ are assumed to be equal, as in previous works~\cite{zarch2025context, chen2025clasp}, optimizing TPT simplifies to the optimization of TPL. 
As illustrated in \cref{fig:correlation}, TPT shows a much higher correlation with practical throughput (i.e., number of generated tokens per time) compared to the estimated accepted rate, demonstrating that TPL is a superior metric to represent actual speed, with further details in \cref{sec:ablation}.

\begin{figure}[t]
    \centering
    \includegraphics[width=\linewidth]{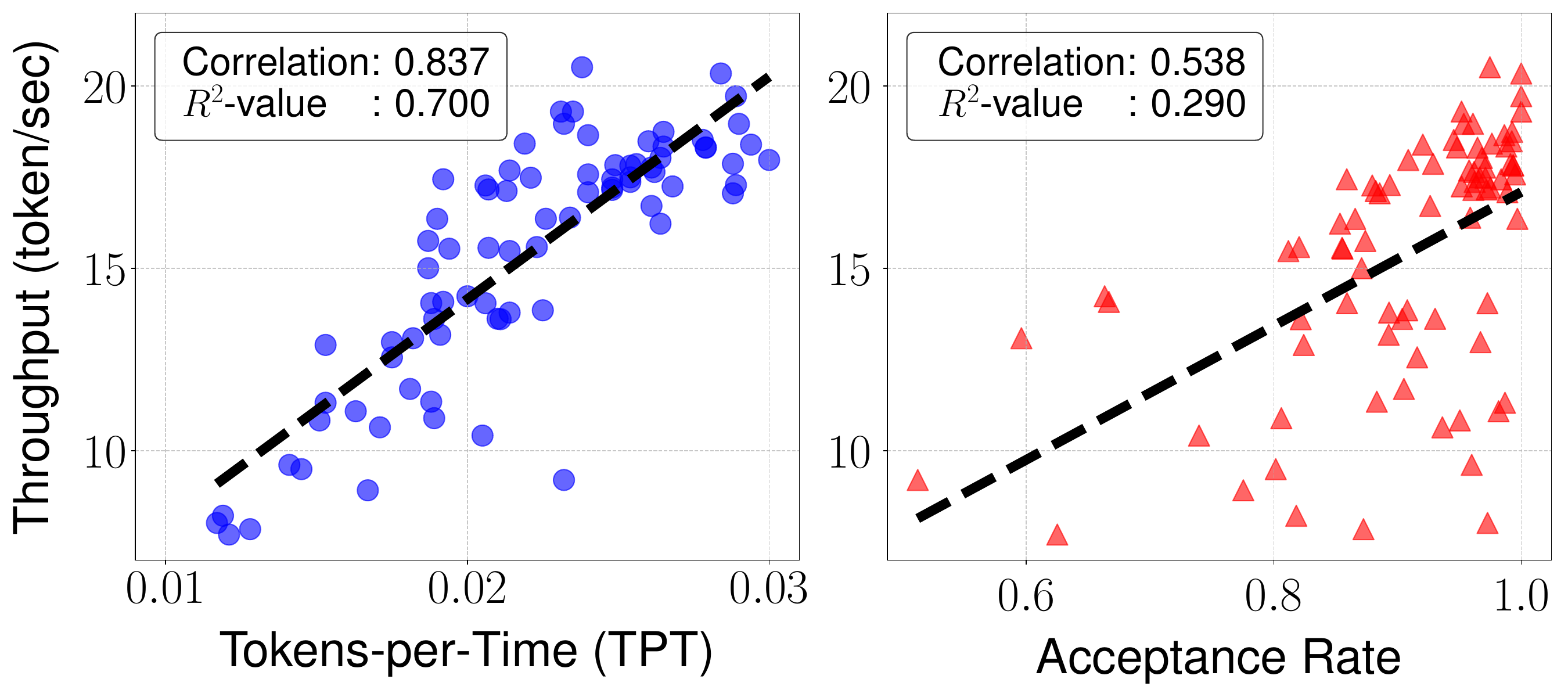}
    \vspace{-0.2in}
    \caption{Pearson Correlation Coefficients (PCC) and $R^2$-value of TPT and acceptance rate against throughput. Best TPT shows a much closer correlation with actual performance.}
    \vspace{-0.1in}
    \label{fig:correlation}
\end{figure}

\subsection{Problem Formulation as a 0/1 Knapsack Problem}
Our goal is to find the optimal layer set $S^*$ and draft length $\gamma^*$ 
that maximize the TPT metric:
\begin{align}
    S^*, \gamma^* = \argmax_{S \subseteq [2L], \gamma \in [D]} \text{TPT}(S, \gamma)
    \label{eq:TPT_objective}
\end{align}
where $D$ is the maximum draft length. 
Since \cref{eq:TPT_objective} involves an exponential search space of size $2^{2L} D$, direct optimization is intractable.

To solve this efficiently, we decompose the problem into two stages by leveraging the fact that for a fixed drafting latency $t_{\mathtt{Draft}}(S)$, TPT is strictly increasing with respect to the acceptance rate $\alpha_S$.
In the first stage (\cref{subsec:dp_knapsack}), we reformulate the layer selection task as a \textit{0/1 Knapsack Problem} to identify optimal sub-networks that maximize the expected acceptance rate across various latency budgets.
Specifically, each Attention and MLP layer is defined as an \textit{item}, where its latency serves as the \textit{weight} and its cosine similarity to the target model representations serves as the \textit{value}.
As calculating $\alpha_S$ directly is costly, we utilize cosine similarity as a reliable proxy to represent the value, a choice we formally justified in \cref{sec:lmm-cos}.
By solving this problem for a range of feasible budgets, we obtain a candidate set $\mathcal{A} = \{S_k\}$, where each $S_k$ represents the layer set that maximizes the acceptance rate for a given latency budget $k$.

In the second stage (\cref{subsec:tpt_maximization}), we perform a final optimization over this reduced search space. 
Since $\mathcal{A}$ contains the selections that maximize the proxy for $\alpha_S$ for every possible drafting cost, the global optimum $(S^*, \gamma^*)$ is expected to reside within $\mathcal{A} \times [D]$. 
We calculate the actual TPT for each candidate pair and select the one that maximizes TPT.

\begin{algorithm}[t]
\caption{Dynamic Programming for Layer Selection} \label{alg:combined}
\begin{algorithmic}[1]
    \STATE {\bf Input:} hidden state matrices $\{X^{(i)}\}_{i \in [2L]}$, decoder mappings $\{f^{(i)}\}_{i \in [2L]}$, layer weights $(w_{\mathtt{Attn}}, w_{\mathtt{MLP}})$
    \STATE $K \gets (w_{\mathtt{Attn}} + w_{\mathtt{MLP}}) \cdot L$
    \STATE $g \gets \text{zeros}(2L+1, K+1, r, d)$
    \STATE $g[0, 0] \gets X^{(0)}$ 
    \COMMENT{Forward Search}
    \FOR{$i=1$ to $2L$}  
        \STATE $w \gets w_{\mathtt{Attn}} \text{ \bf{if} }~ i \bmod 2 = 1 \text{ \bf else }  w_{\mathtt{MLP}}$
        \FOR{$j=0$ to $K$ \textbf{in parallel}}
            \IF{$g[i-1, j] \neq \mathbf{0}$}
                \STATE $h_{\mathtt{e}} \gets f^{(i)}(g[i-1, j])$
                \STATE $\sigma_{\mathtt{e}} \gets \operatorname{cos}(h_{\mathtt{e}}, X^{(i)})$
                \IF{$\sigma_{\mathtt{e}} > \max{(\tau, \operatorname{cos}(g[i, j], X^{(i)}))}$}
                    \STATE $g[i, j] \gets h_{\mathtt{e}}$
                \ENDIF
                \STATE $h_{\mathtt{s}} \gets g[i-1, j]$
                \STATE $\sigma_{\mathtt{s}} \gets \operatorname{cos}(h_{\mathtt{s}}, X^{(i)})$
                \IF{$\sigma_{\mathtt{s}} > \max{(\tau, \operatorname{cos}(g[i, j+w], X^{(i)}))}$ }
                    \STATE $g[i, j+w] \gets h_{\mathtt{s}}$
                \ENDIF
            \ENDIF
        \ENDFOR
    \ENDFOR

    \STATE $\mathcal{A}\gets\emptyset$
    \COMMENT{TPT Maximization}
    \FOR{$j=0$ to $K$}
        \IF{$g[2L, j] \neq \mathbf{0}$}
            \STATE $S \gets \text{Backtrack}(g, 2L, j)$ 
            \STATE $\widehat{\alpha}_S \gets \text{AccRate}(g[2L, j], X^{(2L)})$
            \hfill{$\triangleright$  Eq.(\ref{eq:acceptance})}
            \STATE $\mathcal{A} \gets \mathcal{A} \cup \{S\}$
        \ENDIF
    \ENDFOR

    \STATE $S^*, \gamma^* \gets \operatorname{argmax}_{(S, \gamma) \in \mathcal{A} \times [D]} \frac{\mathbb{E}[\text{Tokens}(\widehat{\alpha}_S, \gamma)]}{\text{Cost}(S, \gamma)}$
    \hfill{$\triangleright$  Eq.(\ref{eq:TPT})}
    
    \STATE {\bfseries return} $S^*$
\end{algorithmic}
\end{algorithm}

\subsection{Efficient Optimization via Dynamic Programming}
\label{subsec:dp_knapsack}
In the first stage, we use a parallel Dynamic Programming (DP) algorithm that processes the representations obtained from generation.
The detailed algorithm of the optimization process is provided in \cref{alg:combined}.

\paragraph{Integer Weight Normalization.}
To utilize DP, we first transform continuous latencies into discrete integer weights by approximating $t_{\mathtt{Attn}}(n) \approx w_{\mathtt{Attn}} \Delta$ and $t_{\mathtt{MLP}} \approx w_{\mathtt{MLP}} \Delta$, where $\Delta = \min(t_{\mathtt{Attn}}(n), t_{\mathtt{MLP}})$ serves as the base unit and the weights are normalized through rounding: $w_{\mathtt{Attn}} = \round{t_{\mathtt{Attn}}(n) / \Delta}$ and $w_{\mathtt{MLP}} = \round{t_{\mathtt{MLP}}(n) / \Delta}$.
Then, the latency $t_{\mathtt{Draft}}(S)$ becomes $(n_{\mathtt{Attn}}(S) w_{\mathtt{Attn}} + n_{\mathtt{MLP}}(S) w_{\mathtt{MLP}}) \Delta$.

For a fixed integer budget $k$, we define the corresponding Knapsack problem to find a layer set $S$ as
\begin{equation}
\begin{split}
    \argmax_{S \subseteq [2L]} ~ \alpha_S \approx &\argmax_{S \subseteq [2L]} ~ \cos(f(X), f^{(S)}(X))\\
    \text{s.t.} \quad & n_{\mathtt{Attn}}(S) w_{\mathtt{Attn}} + n_{\mathtt{MLP}}(S) w_{\mathtt{MLP}} = k,
    \label{eq:objective_knapsack}
\end{split}
\end{equation}
where $X \in \mathbb{R}^{r \times d}$ is the matrix of concatenated representations for the $r$ tokens generated from recent $m = 5$ speculation steps.
This formulation allows us to partition the space of possible layer sets into discrete latency bins, enabling systematic exploration of the trade-off between the expected acceptance rate and execution latency.

\paragraph{Recurrence Relation in DP.}
Dynamic Programming (DP) provides an efficient, near-optimal solution for \cref{eq:objective_knapsack}. 
We define $g[i, j] \in \mathbb{R}^{r \times d}$ as the optimal hidden representations achieved at layer $i$ with a skipped weight $j$. %  of $r$ tokens 
For each step, we consider two candidate states representing the \textit{execution} and \textit{skipping} of the current layer:
\begin{equation} 
\label{eq:h_candidates} 
h_{\mathtt{e}} = f^{(i)}(g[i-1, j]), \quad h_{\mathtt{s}} = g[i-1, j - w_i], 
\end{equation}
where $w_i \in \{w_{\mathtt{Attn}}, w_{\mathtt{MLP}}\}$.
With the base condition $g[0, 0] = X^{(0)}$, the DP recurrence selects the candidate that have higher cosine similarity to the reference representation $X^{(i)} = f^{(i)}(X^{(i-1)})\in \mathbb{R}^{r \times d}$:
\begin{equation} 
g[i, j] = 
\begin{cases} 
  h_e & \text{if } \sigma_{\mathtt{e}} \geq \sigma_{\mathtt{s}} \\
  h_s & \text{otherwise}
\end{cases}
\label{eq:dp_main}
\end{equation}
where the similarity scores $\sigma_{\mathtt{e}} = \operatorname{cos}(h_e, X^{(i)})$ and $\sigma_{\mathtt{s}} = \operatorname{cos}(h_s, X^{(i)})$.
Consequently, each entry $g[2L, j]$ in the terminal row encodes the optimal sequence of layer selections for a skip budget $j$, collectively forming the candidate pool for our final TPT optimization.

\paragraph{Single-pass Efficiency.}
A key advantage of this formulation is that a single execution of the DP algorithm with a maximum budget $K=L(w_{\mathtt{Attn}} + w_{\mathtt{MLP}})$ simultaneously yields optimal solutions for all budget partitions $k \le K$. 
By storing the maximum attainable similarity for each discrete latency bin, we avoid redundant computations when searching for the global optimum. 
Furthermore, since the forward functions $f^{(i)}$ and cosine operations are independent across the budget dimension $j$, they are computed in parallel using batch processing.

\paragraph{Efficiency via Pruning.}
To further minimize overhead, we employ two pruning criteria to discard paths where: (i) the cosine similarity with the reference state falls below $\tau=0.5$, or (ii) the total skipped weight exceeds $K / 2$.
These criteria are based on our empirical observation that states failing these conditions are highly likely to be sub-optimal due to excessive accuracy degradation. 
Removing such unpromising paths reduces search overhead in time and memory without sacrificing the final selection quality. 
We provide further results on various $\tau$ in \cref{sec:ablation}.

\subsection{TPT Maximization via Grid Search}
\label{subsec:tpt_maximization}
The second stage identifies the optimal layer set $S^*$ that maximizes TPT from the candidate set $\mathcal{A} = \{S_k\}_{k \in [K]}$ derived via backtracking from the terminal DP table entries $g[2L, K-k]$, as detailed in \cref{alg:combined}.

For each candidate $S \in \mathcal{A}$, we estimate the expected acceptance rate $\widehat{\alpha}_S$ by evaluating its prediction accuracy in the recent history $X$:
\begin{align}
    \widehat{\alpha}_S = \frac{1}{r} \sum_{i=1}^r \mathbb{I}(\widehat{y}^{(S)}(X_{i,:}) = y(X_{i,:}))
    \label{eq:acceptance}
\end{align}
where $\widehat{y}^{(S)}(X_{i,:})$ is the token predicted by $i$-th embedding $X_{i,:}$ and configuration $S$, and $y({X_{i,:}})$ is the next ground-truth token from the target model. 
We then perform a grid search over $\mathcal{A}\times [D]$ to select the optimal pair $(S^*, \gamma^*)$ that maximizes the TPT objective ($D=10$ in our experiment).

\paragraph{Dynamic Drafting Exit.}
% an optimized value for the draft length $\gamma$ using historical statistics
Even though we obtain $\gamma^*$, we instead use an adaptive $\gamma$ based on real-time token confidence during inference to achieve additional efficiency gains.
Following prior work~\cite{chen2025clasp, zarch2025context}, we terminate the drafting process if the top-1 probability of the draft token falls below $\tau_{\text{conf}} = 0.7$, based on the observation that the acceptance rate of draft tokens is highly correlated with this confidence score~\cite{eagle2}. 
This prevents wasted computation on low-confidence tokens that are likely to be rejected, further maximizing throughput.

\subsection{Discussion on Runtime and Memory Complexity}
Our two-stage process effectively reduces the exponential search space of \cref{eq:TPT_objective} with size $2^{2L} D$ to a tractable set. By retaining only $D$ candidates for each $S \in \mathcal{A}$, the search space reduces to $O(nD)$ as $\abs{\mathcal{A}}=K$ scales linearly with the context length $O(n)$.
A na\"ive implementation of the DP algorithm would require $O(KL)$ forward mapping calls. Since each layer takes $O(n)$ time, the time complexity is $O(n^2 L)$ and the memory overhead becomes $O(nL)$. 
We optimize this using a parallel DP algorithm. By leveraging batch processing of hidden representations, we reduce runtime and memory complexity to $O(nL)$ and $O(L)$, respectively.
This ensures that the search overhead remains the same complexity as the standard autoregressive decoding, while transforming the exponential dependency on $L$ into a linear one.

\input{table1.tex}

\section{Cosine Similarity in Layer Selection SSD} \label{sec:lmm-cos}

In this section, we rigorously investigate the relationship between the cosine similarity of hidden embeddings and the speculative acceptance rate. 
Although CLaSp \citep{chen2025clasp} empirically utilized the cosine similarity between the target output $f(x)$ and the draft output $f^{(S)}(x)$ as a proxy to identify draft models, the theoretical justification remains largely intuitive. 
We bridge this gap by demonstrating that sufficiently high cosine similarity is a sufficient condition for draft token acceptance, providing a formal justification for its use as an optimization objective.

Suppose that embeddings in the LM head are given by $w_1, ..., w_V \in \mathbb{R}^d$ where $V$ is the size of the vocabulary. In greedy decoding, the predicted token $y(x)$ for the last hidden embedding $x$ becomes $\argmax_{j \in [V]} \inner{w_j, x}$. 
We identify a condition on the cosine similarity between the target embedding $x$ and a draft embedding $x'$ that guarantees identical next token selection.

\begin{restatable}{lemma}{lmmcos}\label{lmm:cos}
Given embeddings $w_1, ..., w_V \in \mathbb{R}^d$ and a fixed vector $x \in \mathbb{R}^d$, let $i^* := \argmax_{i} \inner{w_i, x}$ and define the margin  $\xi(x) := \inner{w_{i^*}, x} - \max_{j \neq i^*} \inner{w_j, x}$. For any $x' \in \mathbb{R}^d$ such that $\norm{x'}_2 = \norm{x}_2$, if 
\begin{align} 
    \cos(x,x') \geq 1 - \frac{\xi(x)^2}{2 \norm{x}_2^2 \max_{j \neq i^*} \norm{w_{i^*} - w_j}_2^2} \label{eq:condition}
\end{align}
then $\argmax_{i \in [V]} \inner{w_i,x} = \argmax_{i \in [V]} \inner{w_i,x'}$.
\end{restatable}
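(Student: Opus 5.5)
Our plan is to convert the cosine condition into a Euclidean bound on $x - x'$, and then use Cauchy--Schwarz to show that no competitor $j \neq i^*$ can overtake $i^*$. We may assume $\xi(x) > 0$, since otherwise the argmax is not unique and the statement is vacuous. We may also assume $x \neq 0$, because $x = 0$ forces $\xi(x) = 0$.

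\textbf{Step 1: distance bound.} Because $\norm{x'}_2 = \norm{x}_2$, we have the identity
\[
\norm{x - x'}_2^2 = 2\norm{x}_2^2\bigl(1 - \cos(x,x')\bigr).
\]
Substituting \cref{eq:condition} gives
\[
\norm{x-x'}_2^2 \le \frac{\xi(x)^2}{\max_{j\neq i^*}\norm{w_{i^*}-w_j}_2^2}.
\]
Hence, for every $j \neq i^*$,
\[
\norm{w_{i^*}-w_j}_2\,\norm{x-x'}_2 \le \xi(x).
\]

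\textbf{Step 2: margin preservation.} Fix $j \neq i^*$ and decompose
\[
\inner{w_{i^*}-w_j, x'} = \inner{w_{i^*}-w_j, x} - \inner{w_{i^*}-w_j, x-x'}.
\]
The first term is at least $\xi(x)$ by the definition of the margin. By Cauchy--Schwarz and Step 1, the second term is at most $\norm{w_{i^*}-w_j}_2\norm{x-x'}_2 \le \xi(x)$. Therefore $\inner{w_{i^*}, x'} \ge \inner{w_j, x'}$ for all $j \neq i^*$, so $i^*$ remains a maximizer at $x'$.

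\textbf{Main obstacle: strict inequality.} The bound obtained above is weak, not strict. At equality in \cref{eq:condition}, a tie could in principle occur, and then the argmax would not be unique. To handle this, we would first check whether both inequalities can be tight at once. Tightness requires two things: $x - x'$ must be parallel to $w_{i^*} - w_j$ for a maximizing $j$, and that same $j$ must attain the margin. This configuration is consistent, so equality can in fact produce a tie. We therefore plan either to read the conclusion as ``$i^*$ is a maximizer for $x'$'' (adopting a tie-breaking convention that favours $i^*$), or to note that replacing $\ge$ with $>$ in \cref{eq:condition} gives strict inequality and hence a unique, identical argmax. In the write-up we would state the strict version and remark on the boundary case. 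The remaining degenerate cases ($\xi(x) = 0$ or $x = 0$) are dismissed at the outset, as described above.
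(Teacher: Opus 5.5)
Your core argument is exactly the paper's proof. You use the identity $\|x-x'\|_2^2 = 2\|x\|_2^2(1-\cos(x,x'))$ to turn the cosine condition into $\|x-x'\|_2 \le \xi(x)/\max_{j\neq i^*}\|w_{i^*}-w_j\|_2$, and then apply Cauchy--Schwarz to $\langle w_{i^*}-w_j, x-x'\rangle$. The paper stops at $\langle w_{i^*}-w_j,x'\rangle\ge 0$ and declares the argmax equal to $i^*$. So you were right that this weak inequality alone only makes $i^*$ \emph{a} maximizer at $x'$.

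Your resolution of that point is wrong, however. The configuration you call ``consistent'' is not, because it ignores the constraint $\|x'\|_2=\|x\|_2$. Write $u=w_{i^*}-w_j$. A tie $\langle u,x'\rangle=0$ forces every step of the chain to be tight, i.e. $\langle u,x\rangle=\xi(x)$ and $\langle u,x-x'\rangle=\|u\|_2\|x-x'\|_2=\xi(x)>0$. Equality in Cauchy--Schwarz then gives $x-x'=cu$ with $c>0$. Equal norms give $-2c\langle u,x\rangle+c^2\|u\|_2^2=0$, so $c=2\langle u,x\rangle/\|u\|_2^2$; that is, $x'$ would be the reflection of $x$ across $u^\perp$. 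Hence $\langle u,x-x'\rangle=c\|u\|_2^2=2\langle u,x\rangle=2\xi(x)$. This contradicts $\langle u,x-x'\rangle=\xi(x)$, since $\xi(x)>0$. So the chain is never tight: $\langle w_{i^*}-w_j,x'\rangle>0$ for every $j\neq i^*$, and the argmax at $x'$ is uniquely $i^*$ under the hypothesis exactly as stated, with $\ge$.

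You should therefore not weaken the conclusion to ``$i^*$ is a maximizer'', nor replace $\ge$ by $>$ in the hypothesis. Either change would prove less than the lemma claims, and your remark that equality ``can in fact produce a tie'' is false. Replace your ``main obstacle'' paragraph with the short reflection argument above. It also closes the same small lacuna in the paper's written proof.
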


The proof is provided in \cref{sec:lmm-cos-proof}. \cref{lmm:cos} demonstrates that maximizing the cosine similarity of draft hidden states is a mathematically sound strategy for minimizing divergence from the target model's greedy predictions. The assumption $\|x\|_2 \approx \|x'\|_2$ is empirically supported by modern architectures (e.g., RMSNorm) that project hidden states onto a hypersphere of nearly constant radius.

As a result, maximizing cosine similarity under a latency budget (i.e., \cref{eq:cos}) serves as a reliable surrogate for the greedy acceptance rate $\alpha_S$. 
While not identical, \cref{lmm:cos} establishes cosine similarity as a rigorous proxy for ensuring the draft model is close to the target model:
% \begin{align*}
$\argmax_{S} \alpha_S \approx \argmax_{S}  \cos(f(X), f^{(S)}(X)).$
% \end{align*}
This allows \ours framework to maximize throughput across varying context lengths while maintaining provable guarantees on the consistency of the generated tokens.

\input{table2.tex}

\section{Experiments in Long-context Scenarios} 
\label{sec:experiments}

We evaluate \ours against state-of-the-art training-free SSD methods: SWIFT~\cite{xia2024swift}, DEL~\cite{zarch2025context}, and CLaSp~\cite{chen2025clasp}.
Our evaluation spans diverse long-context scenarios: (1) long-context generation using reasoning tasks from AIME24/25~\cite{aime24} and MMLU-Pro~\cite{wang2024mmlu} datasets with Qwen3~\cite{yang2025qwen3} models, and (2) long-context input processing using summarization tasks from GovReport~\cite{huangefficient}, PG19~\cite{raecompressive2019}, and BookSum~\cite{kryscinski2022booksum} datasets with Llama3~\cite{grattafiori2024llama} models.

For reasoning tasks, the maximum generation length is set to 32K tokens for AIME24/25, and 4K tokens for MMLU-Pro. 
Summarization tasks involve an average input length of up to 16K tokens.
We set the optimization interval to $T=64$ steps for models with fewer than 10B parameters and $T=128$ for larger variants.

\subsection{Main Performance Results}
As shown in \cref{tab:greedy_throughput}, \ours consistently outperforms all baselines in both Tokens-per-Time (TPT) values and wall-clock speedup across the 1B to 70B parameter range. 
While existing methods struggle to maintain efficiency as the context window expands, \ours remains robust. 
For instance, on GovReport with Llama3.1-70B, \ours achieves a 1.47$\times$ speedup, significantly exceeding SWIFT (1.33$\times$) and CLaSp (1.22$\times$).
Notably, DEL fails to provide acceleration in this setting, as it is designed for fine-tuned models to be robust to early-exit, such as LayerSkip, hence not guaranteed to work effectively for general base models.

This performance gain demonstrates that by decoupling Attention and MLP layers and optimizing for hardware-aware latency, \ours bypasses the throughput bottlenecks inherent in traditional block-level strategies.
Furthermore, \ours consistently achieves the highest TPT across all datasets. 
Notably, while competing methods occasionally yield higher acceptance rates, their actual throughput remains lower than ours. 
This confirms that the acceptance rate is an insufficient proxy for wall-clock efficiency. 
By directly optimizing TPT via our Knapsack optimization, \ours identifies configurations that perfectly balance predictive accuracy with execution cost.

We further evaluate the performance of \ours under stochastic sampling settings beyond greedy decoding.
Specifically, we employ nucleus sampling~\cite{holtzmancurious} with a temperature of $T=0.7$, and top-$k$ and top-$p$ thresholds set to $k=50$ and $p=0.95$, respectively.
As summarized in \cref{tab:sampling_throughput}, \ours consistently maintains its execution efficiency and wall-clock speedup even when tokens are generated probabilistically. 
This confirms that directly optimizing for length-dependent hardware latencies via our parallel dynamic programming knapsack selection provides consistent robustness across both deterministic and non-deterministic decoding strategies.

\subsection{Ablation Studies} 
\label{sec:ablation}
\paragraph{TPT vs. Acceptance Rate for Practical Speedup.} %\label{sec:exp-correlation}
We investigate whether TPT or acceptance rate is a better predictor of practical speedup. 
Using Llama-3.1-8B on GovReport (12K–13K tokens), we compute the Pearson correlation between these metrics and final throughput.
As shown in \cref{fig:correlation}, TPT shows a significantly higher correlation (0.837) with throughput than with the acceptance rate (0.538).
While a higher acceptance rate is generally desirable as it indicates better model alignment, it often ignores the computational overhead of the drafting process or architectural bottlenecks in local speculative execution. 
In contrast, TPT inherently balances the expected token against the wall-clock time required for generation, making it a more predictive indicator for optimizing speculative decoding.

\paragraph{Adaptive Layer Selection Ratio.}
We analyze the layer selection behaviors of \ours compared to SWIFT~\cite{xia2024swift} using the Qwen3-8B on the AIME24. 
For SWIFT, we apply Bayesian optimization independently at each context length $n$ (20 separate optimization runs each) to identify the optimal configuration. 
As the latency of the Attention layer grows linearly with $n$ while the MLP latency remains constant, Attention becomes the dominant bottleneck in long-context scenarios. 
As shown in \cref{fig:skip_ratio}, \ours naturally increases the ratio of skipped Attention layers as $n$ grows, demonstrating its ability to adaptively optimize to increasing costs. 
In contrast, SWIFT's selection remains relatively static due to its lack of an explicit latency-aware objective.
This highlights the need for a search mechanism that can identify a dynamic, length-aware configuration to maintain high decoding speed as $n$ increases.

\paragraph{Cosine Similarity Pruning Threshold.} %\label{sec:thres}
We evaluate the impact of the similarity threshold $\tau$ in \cref{subsec:dp_knapsack} using Llama-3.1-8B. 
We prune candidate states where the cosine similarity with the reference falls below $\tau$ to reduce the search space and memory usage. 
In Figure \ref{fig:threshold}, $\tau = 0.5$ maintains a speedup of 1.26$\times$, matching the speedup without pruning while reducing memory consumption by 31\%. 
This indicates that pruning based on the cosine similarity effectively filters redundant branches without sacrificing performance. 
However, exceeding this threshold causes a decline in speedup as valid candidates are prematurely discarded. 
Thus, we select $\tau=0.5$ to preserve maximum speedup while improving search efficiency.

\section{Conclusion}
We introduce \ours, a training-free self-speculative decoding framework that formulates the layer selection task as a knapsack problem. 
By decoupling Attention and MLP layers and accounting for hardware-aware, length-dependent latencies, \ours adaptively identifies optimal draft configurations to maximize decoding throughput. 
Furthermore, we provide a rigorous theoretical foundation for using cosine similarity as a reliable proxy for the expected acceptance rate. 
Extensive evaluations across various LLMs and datasets demonstrate superior speedups, particularly in long-context scenarios.
Our work offers a practical and efficient solution for accelerating next-generation generative AI without additional training or intensive overhead.

\begin{figure}[t]
  \centering
  \vspace{-0.02in}
  \includegraphics[width=0.95\linewidth]{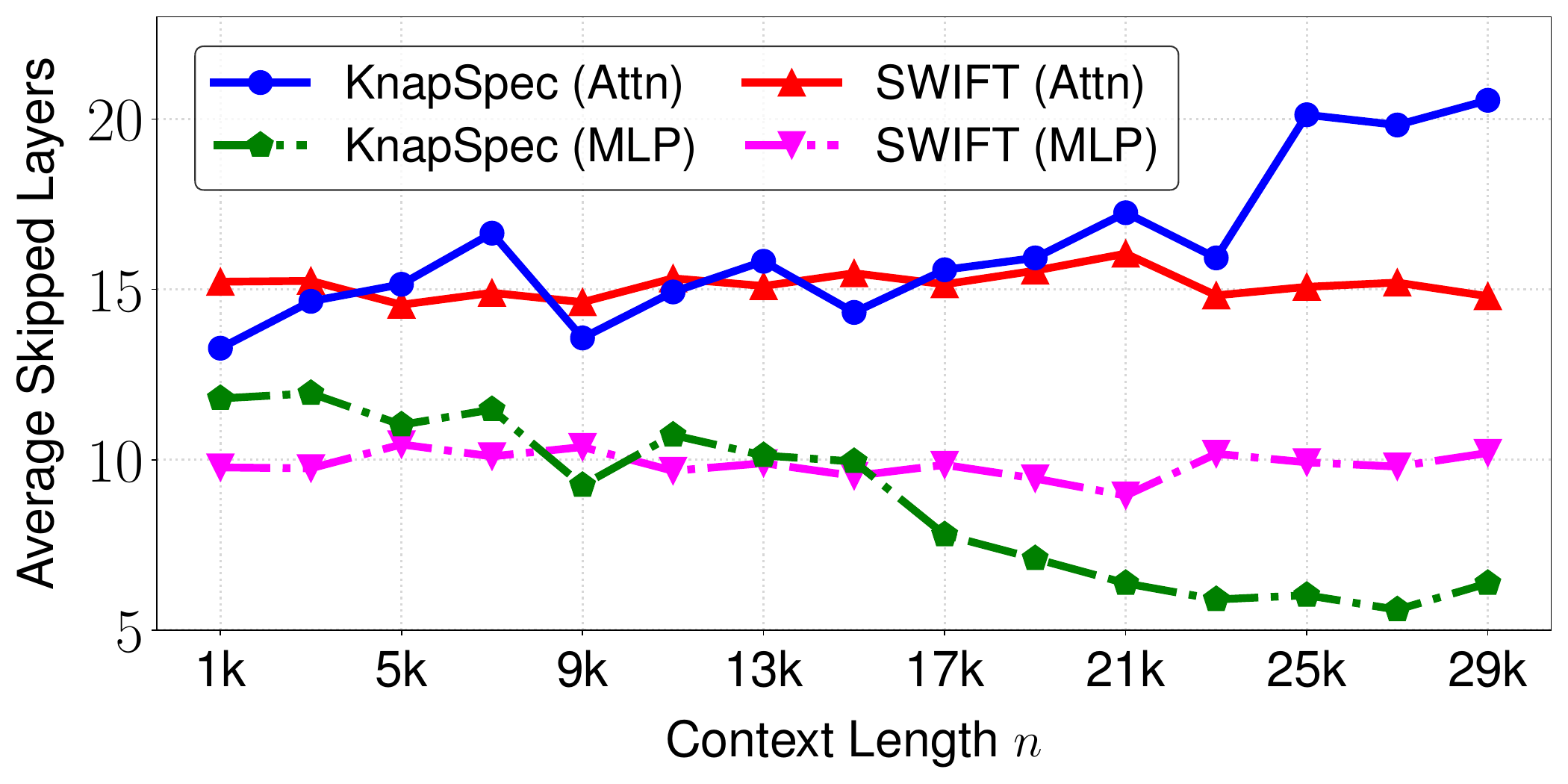}
  \vspace{-0.08in}
  \caption{Comparison of skipped layers across context lengths up to 32k. \ours chooses more Attention layers to skip than MLP as the length $n$ increases, while SWIFT chooses uniform layers.
  \vspace{-0.12in}
  }\label{fig:skip_ratio}
\end{figure}

\section*{Acknowledgments}
This work was supported by the National Research Foundation of Korea (NRF) grant funded by the Korea government (MSIT) (No. RS-2024-00406715 and No. RS-2025-23523958) and BK21 FOUR(Connected AI Education \& Research Program for Industry and Society Innovation, KAIST EE, No. 4120200113769). It was also supported in part by U.S. NSF IIS-2225942. Any opinions, findings, conclusions, or recommendations expressed in this material are those of the authors and do not necessarily reflect the views of these organizations.

\section*{Impact Statement}
This paper presents work whose goal is to advance the field of Machine Learning. There are many potential societal consequences of our work, none of which we feel must be specifically highlighted here.

\begin{figure}[t]
    \centering
    \vspace{-0.02in}
    \includegraphics[width=\linewidth]{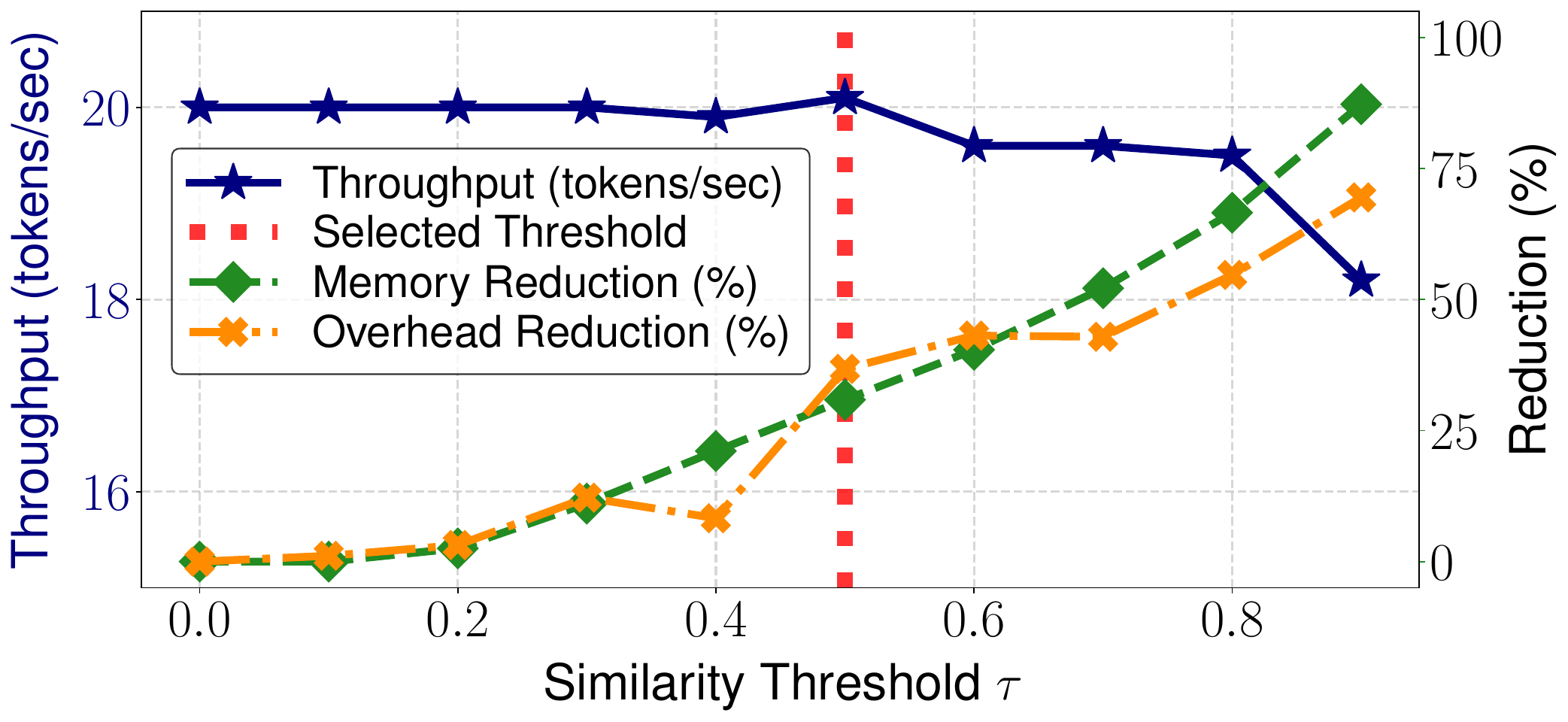}
    \vspace{-0.2in}
    \caption{Throughput, memory and optimization overhead reduction (\%) with various similarity thresholds $\tau$. We select $\tau=0.5$ to achieve the best balance between speed and memory.
    \vspace{-0.12in}
    }
    \label{fig:threshold}
\end{figure}

\bibliography{reference}
\bibliographystyle{icml2026}

\appendix  

\onecolumn 

\section{Proof of Theorems} \label{sec:lmm-cos-proof}

\lmmcos*

\begin{proof}
For any $j \neq i^*$, we observe that
\begin{align}
    \inner{w_{i^*} - w_j, x'} &= \inner{w_{i^*} - w_j, x} + \inner{w_{i^*} - w_j, x-x'} \nonumber \\
    &\geq \xi(x) + \inner{w_{i^*} - w_j, x-x'} \nonumber \\
    &\geq \xi(x) - \norm{w_{i^*} - w_j}_2 \norm{x-x'}_2 \label{eq:lower-bound-margin}
\end{align}
where the first inequality comes from the definition of $\xi(x)$ and the second one is from the Cauchy-Schwarz inequality. 
\cref{eq:condition} is equivalent to
\begin{align}
\norm{x - x'}_2 \leq \frac{\xi(x)}{\max_{j \neq i^*} \norm{w_{i^*} - w_j}_2}. \label{eq:condition2}
\end{align}
Combining \cref{eq:lower-bound-margin} and \eqref{eq:condition2} yields 
$\inner{w_{i^*} - w_j,x'} \geq 0$, equivalently, this holds $\argmax_i \inner{w_i,x'} = i^*$. 
\end{proof}

\section{Additional Experiments}
\subsection{Optimization Interval}
We investigate the impact of the optimization interval (in steps) on both end-to-end throughput and optimization overhead. The optimization interval determines how frequently the algorithm re-profiles and updates the layer selection strategy. A smaller interval allows for more fine-grained adaptation to dynamic context changes but incurs higher computational overhead, whereas a larger interval reduces overhead but may lead to stale configurations.

Figure \ref{fig:interval} illustrates the trade-off between throughput and optimization overhead. As the interval increases from 4 to 64 steps, optimization overhead decreases significantly (40.9\% to 7.5\%). Consequently, throughput peaks at 18.1 tok/s with an interval of 64.
However, beyond 64 steps (e.g., at 128 and 256), throughput degrades despite minimal overhead. The skip configuration becomes outdated too quickly, reducing the acceptance rate and negating the benefits of lower optimization cost. 
Thus, an interval of 64 steps provides the best trade-off between generation speed and adaptive responsiveness. 

\begin{figure}[ht]
    \centering
    \includegraphics[width=0.7\linewidth]{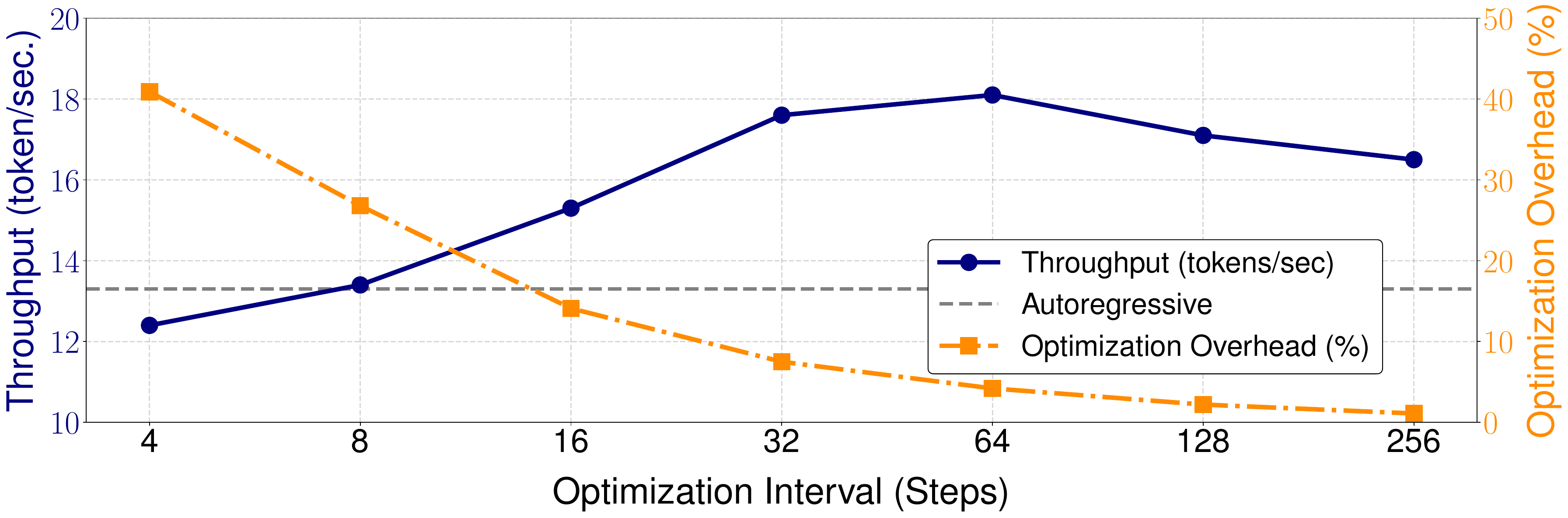}
    \caption{The trade-off between throughput (blue) and optimization overhead (orange) as a function of the optimization interval. Increasing the interval reduces overhead but eventually harms throughput due to stale layer configurations. The optimal interval is found at 64 steps, achieving the highest throughput of 18.1 tokens/second.}
    \label{fig:interval}
\end{figure}

\subsection{Distribution of Skipped Layers}
We analyze the layer-wise skip distribution of the Qwen3-8B model($L=36$) and Llama-3.1-8B($L=32$) model using our proposed method. ~\cref{fig:skip_distribution} illustrates the probability of skipping each layer, aggregated over the evaluation samples. The layers are indexed from 1 to $2L$, alternating between Attention and MLP layers.

As shown in \cref{fig:skip_distribution}, skip probabilities are non-uniform and model-specific. Certain layers exhibit near-zero skip probabilities, suggesting their essential roles (e.g., 10-th layer in Qwen; 28--33-th layers in Llama). In contrast, other layers are skipped more frequently than their neighbors, indicating higher structural redundancy (e.g., 21-st layer in Qwen; 47-th layer in Llama).

\begin{figure}[ht]
  \centering
  % --- Qwen Subfigure ---
  \begin{subfigure}{\linewidth}
    \centering
    \caption{Layer-wise Skip Probabilities : Qwen-8B}
    \label{fig:skip_qwen}
    \includegraphics[width=0.8\linewidth]
      {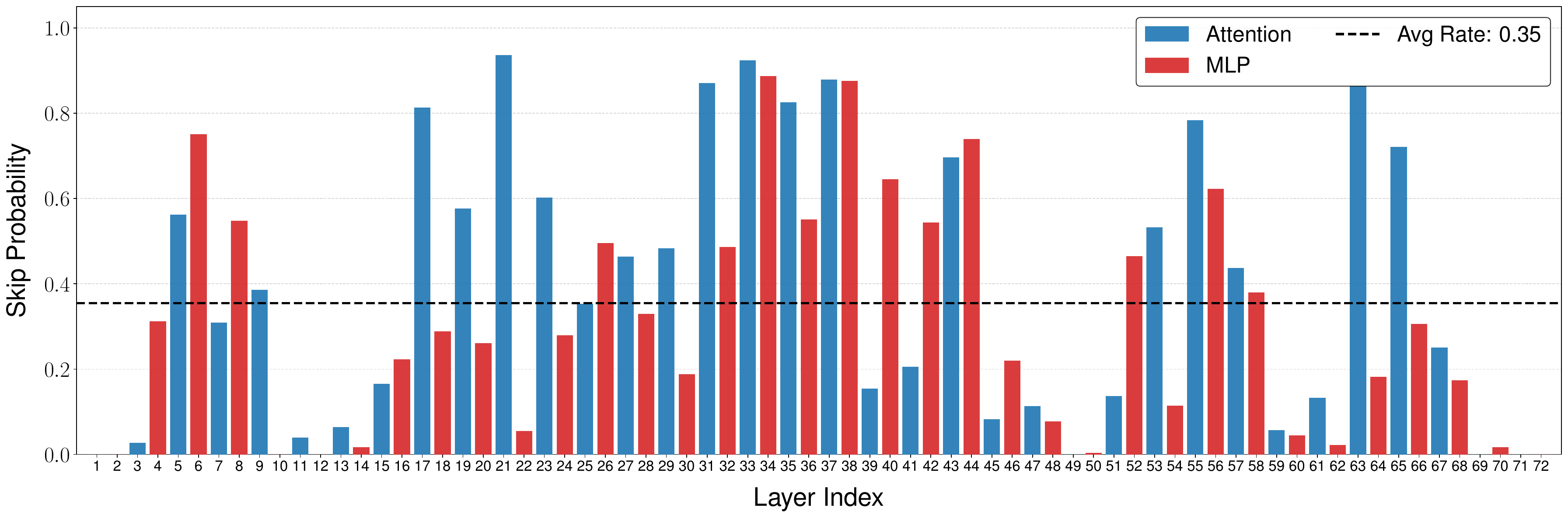}
  \end{subfigure}
  
  \vspace{0.1in}

  % --- Llama Subfigure ---
  \begin{subfigure}{\linewidth}
    \centering
    \caption{Layer-wise Skip Probabilities : Llama 3.1-8B}
    \label{fig:skip_llama}
    \includegraphics[width=0.8\linewidth]
      {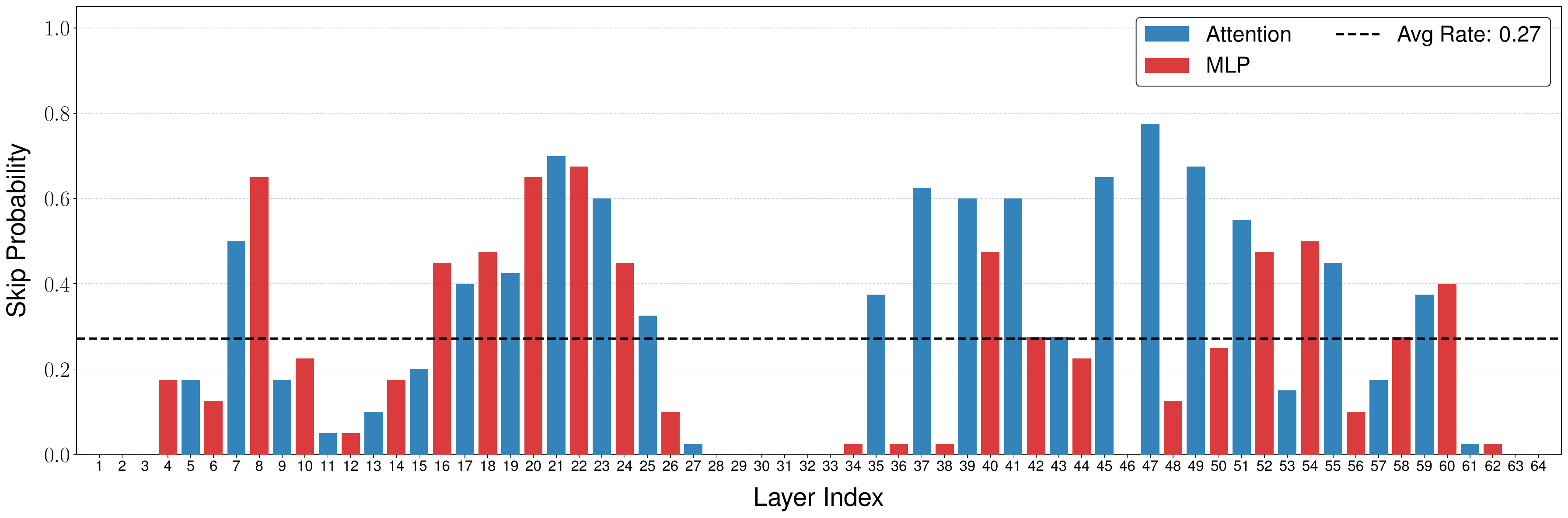}
  \end{subfigure}
  \caption{Layer-wise skip probabilities aggregated over evaluation samples. The non-uniform distribution demonstrates the model-specific redundancies. While certain layers are frequently skipped to reduce computational cost, some layers consistently show low skip probabilities.}
  \label{fig:skip_distribution}
\end{figure}

\section{Hardware-Aware Latency Formulation and Robustness} \label{sec:hardware_latency}
% \textcolor{blue}{
% Our model ($t_{\mathtt{MLP}}=c_1, t_{\mathtt{Attn}}(n)=c_2 \cdot n + c_3$) uses direct wall-clock profiling to capture system bottlenecks like KV-cache access and memory bandwidth. While MLP latency ($c_1$) is invariant to context length, Attention latency ($c_2 \cdot n + c_3$) scales linearly with $n$, accurately reflecting overhead in long-context scenarios.
% \cref{tab:hardware_profiling} shows that while coefficients vary by hardware (i.e., GPU) and model, \ours observes consistent speedups (i.e., $\sim$1.25$\times$) by adapting DP search to select the TPT-optimal subnetwork. This confirms the practical validity and robustness of our formulation across diverse system settings.
% }

To formalize the length-dependent costs in Section \ref{sec:tpt_metric}, we model the latencies of MLP and Attention layers as functions of the context length $n$: $t_{\mathtt{MLP}} = c_1$ and $t_{\mathtt{Attn}}(n) = c_2 \cdot n + c_3$, where $c_1, c_2, c_3$ are hardware- and model-specific empirical coefficients. While MLP latency ($c_1$) is invariant to context length due to static operations, Attention latency ($c_2 \cdot n + c_3$) scales linearly with $n$, accurately capturing system bottlenecks like KV cache access and memory bandwidth in long-context scenarios. We determine these coefficients via least squares linear regression from a one-time offline profiling sweep.

\begin{table}[ht]
\centering
\vspace{0.05in}
\scalebox{0.9}{
\begin{tabular}{ll cccc}
\toprule
\textbf{Hardware} & \textbf{Model} & $c_1$ & $c_2$ ($\times 10^{-4}$) & $c_3$ & \textbf{Speedup} \\
\midrule
\multirow{2}{*}{RTX 6000 Pro} 
& Llama-3.1-8B & 0.279 & 0.735 & 0.279 & 1.27$\times$ \\
& Llama-3.2-3B & 0.161 & 0.720 & 0.255 & 1.25$\times$ \\
\midrule
\multirow{2}{*}{RTX A6000}    & Llama-3.1-8B & 0.520 & 0.942 & 0.216 & 1.28$\times$ \\
                              & Llama-3.2-3B & 0.248 & 0.855 & 0.186 & 1.24$\times$ \\
\midrule
\multirow{2}{*}{RTX 4090}     & Llama-3.1-8B & 0.386 & 0.766 & 0.167 & \textit{O.O.M.} \\
                              & Llama-3.2-3B & 0.180 & 0.724 & 0.140 & 1.26$\times$ \\
\bottomrule
\end{tabular}
}
\vspace{0.02in}
\caption{Multi-Hardware Profiling and \ours's Speedup on GovReport.} \label{tab:hardware_profiling}
\end{table}

As summarized in \cref{tab:hardware_profiling}, while the fitted coefficients vary across diverse GPU architectures and model scales, \ours observes consistent speedups (i.e., $\sim$1.25$\times$). By plugging these analytically computed weights into the dynamic programming search on the fly, the framework adaptively identifies the TPT-optimal subnetwork at any given token position, confirming the practical validity and robustness of our formulation across diverse system settings.

\section{Algorithm Description}
To provide a more granular view of the optimization process introduced in \cref{{sec:main}}, we decompose the procedure into two specialized stages. Algorithm 1 details the forward dynamic programming used to populate the similarity table, while Algorithm 2 describes the systematic backtracking and grid search used to identify the optimal configuration.

\begin{algorithm}[ht]
% \footnotesize
\captionsetup{labelformat=empty}
\caption{{\bf Algorithm 1} Dynamic Programming for Candidate Search} \label{alg:dp-supp}
\begin{algorithmic}[1]
   \STATE {\bf Input:} hidden state matrices $\{X^{(i)}\}_{i \in [2L]}$, decoder mappings $\{f^{(i)}\}_{i \in [2L]}$, layer weights $(w_{\mathtt{Attn}}, w_{\mathtt{MLP}})$, number of layers $L$, hidden size $d$, number of cached tokens $r$
   \STATE $K \gets (w_{\mathtt{Attn}} + w_{\mathtt{MLP}}) \cdot L$  \COMMENT{Initialization}
   \STATE $g \gets \text{zeros}(2L+1,K+1,r,d)$
   \STATE $g[0,0] \gets X^{(0)}$ \COMMENT{Forward Search (DP)}
   \FOR{$i=1$ to $2L$}
   \STATE $g[i,0] \gets X^{(i)}$
   \STATE $w \gets w_{\mathtt{Attn}} \text{ \bf{if} }~ i \bmod 2 = 1 \text{ \bf else }  w_{\mathtt{MLP}}$

   \FOR{$j=0$ to $K$ \textbf{in parallel}}
   \IF{$g[i-1,j] \neq \textbf{0}$} 
   \STATE $h_{\text{e}} \gets f^{(i)}(g[i-1,j])$
   \IF{$\cos(h_{\text{e}}, X^{(i)}) > \cos(g[i,j],X^{(i)})$} 
   \STATE $g[i,j] \gets h_{\text{e}}$ \COMMENT{Execute}
   \ENDIF
   \STATE $h_{\text{s}} \gets g[i-1,j]$
   \IF{$\cos(h_{\text{s}}, X^{(i)})> \cos(g[i,j+w],X^{(i)})$}
   \STATE $g[i,j+w] \gets h_{\text{s}}$ \COMMENT{Skip} 
   \ENDIF
   \ENDIF
   \ENDFOR
   \ENDFOR
   \STATE {\bfseries return} $g$
\end{algorithmic}
\end{algorithm}

\begin{algorithm}[t]
\captionsetup{labelformat=empty}
\caption{{\bf Algorithm 2} Layer Set Selection Based on TPT Maximization} \label{alg:tpt}
\begin{algorithmic}[1]
\STATE {\bfseries Input:} DP table $g$, final hidden state matrix $X^{(2L)}$, module weight $(w_{\mathtt{MLP}}, w_{\mathtt{Attn}})$, maximum draft length $D$
\STATE $S^* \gets \emptyset, \text{TPT}^* \gets 0$ 
\COMMENT{BackTracking}
\FOR{$k=0$ to $K$}
  \STATE $h \gets g[2L, k]$
  \STATE \textbf{if} $h = \mathbf{0}$ \textbf{then continue}
  \STATE $S, i, j \gets \emptyset, 2L, k$
  \WHILE{$i > 0$ and $j>0$}
    \STATE $w \gets w_{\mathtt{Attn}} \text{ \bf{if} }~ i \bmod 2 = 1 \text{ \bf else }  w_{\mathtt{MLP}}$
    \IF{$g[i, j] = g[i-1,j-w]$}
      \STATE $j \gets j - w$
    \ELSE
      \STATE $S \gets S \cup \{i-1\}$
    \ENDIF
    \STATE $i \gets i - 1$
  \ENDWHILE
  \COMMENT{Acceptance Rate Estimation}
  \STATE $N_{\mathtt{match}} \gets 0$ \hfill{$\triangleright$ \cref{eq:acceptance}}
  \FOR{$i=1$ to $r$}
      \IF{$\operatorname{argmax}_{v \in [V]} \langle w_v, h_{i,:} \rangle = \operatorname{argmax}_{v \in [V]} \langle w_v, X^{(2L)}_{i,:} \rangle$}
          \STATE $N_{\mathtt{match}} \gets N_{\mathtt{match}} + 1$
      \ENDIF
  \ENDFOR
  \STATE $\widehat{\alpha}_S \gets N_{\mathtt{match}} / r$ 
  \COMMENT{Grid Search}
  % \STATE $\widehat{\alpha}_S \gets \text{AccRate}(h, X^{(2L)})$ \hfill{$\triangleright$ \cref{eq:acceptance}}
  \FOR {$\gamma=1$ to $D$}
    \STATE $\text{TPT} \gets \frac{1-\widehat{\alpha}_S^{\gamma+1}}{1-\widehat{\alpha}_S} \cdot \frac{1}{ \gamma~t_{\mathtt{Draft}}(S) + t_{\mathtt{Target}}}$ \hfill{$\triangleright$ \cref{eq:TPT}}
    \IF{$\text{TPT} > \text{TPT}^*$}
      \STATE $S^* \gets S, \quad \text{TPT}^* \gets \text{TPT},$
    \ENDIF
  \ENDFOR
\ENDFOR

\STATE {\bfseries return} $S^*$
\end{algorithmic}
\end{algorithm}

\end{document}

%% file: notations.tex
\usepackage{amsmath,amsfonts,bbm,bm}

\newcommand{\norm}[1]{\ensuremath{\left\| #1 \right\|}}

\newcommand{\inner}[1]{\left \langle {#1} \right \rangle}

\newcommand{\abs}[1]{\left |#1\right|}

\newcommand{\round}[1]{\left \lfloor {#1} \right \rceil}

\def\argmax{\mathop{\rm argmax}}

\def\0{{\bm 0}}

%% file: 01_introduction.tex
\section{Introduction}

Large language models (LLMs) have become central components in modern AI systems, powering applications well beyond traditional natural language processing, including code generation~\cite{dong2025survey,joel2024survey}, multimodal reasoning~\cite{lu2022learn}, and interactive agents~\cite{park2023generative}.
As model sizes and context lengths continue to grow, inference cost has emerged as a major bottleneck for scalable deployment.
During the generation stage, producing each new token typically requires substantial computation 
%conditioned on prior context, 
thereby increasing inference latency and computational cost.
Efficient decoding strategies are therefore essential for enabling practical use of LLMs.

Speculative Decoding (SD)~\citep{leviathan2023fast,chen2023accelerating,xia2023speculative,choi2025mamba} has recently gained attention as an effective strategy for accelerating sequence generation without modifying model parameters.
Typically, a lightweight draft model generates tokens sequentially in an autoregressive manner, and a target model then verifies multiple proposed tokens in parallel.
This enables substantial speedups by amortizing the expensive verification computation across multiple tokens.
Concurrently, recent frameworks~\cite{sun2024triforce,sadhukhan2025magicdec,cha2026specextend} adapt this paradigm to long contexts by optimizing KV cache management and attention operations.
In most existing speculative decoding approaches, the draft and target models are distinct, with the draft model chosen to be significantly cheaper than the target model.
However, relying on two distinct models introduces practical and algorithmic challenges: maintaining and deploying an additional draft model increases system complexity, and distribution mismatch between the draft and target models can significantly reduce acceptance rates.

\begin{table*}[t]
\centering
\caption{Comparison of \ours with existing self-speculative decoding methods. \ours is the first to offer a training-free, module-level separable search that adaptively reconfigures the draft model based on context length.}
\label{tab:method_comparison}
\scalebox{0.83}{
\begin{tabular}{lccccc}
\toprule
\makecell{\textbf{Methods}} & \makecell{\textbf{Training-Free}} & \textbf{\makecell{Attn/MLP\\Separable}} & \textbf{\makecell[c]{Adaptive \\Search}} & \textbf{\makecell[c]{Context-Length\\Aware}} & \textbf{\makecell{Search Strategy}} \\
\midrule
\makecell[l]{LayerSkip~\cite{elhoushi2024layerskip}} & \textcolor{red}{\ding{55}} & \textcolor{red}{\ding{55}} & \textcolor{red}{\ding{55}} & \textcolor{red}{\ding{55}} & Static Early-Exit \\
Kangaroo~\cite{liu2024kangaroo} & \textcolor{red}{\ding{55}} & \textcolor{red}{\ding{55}} & \textcolor{red}{\ding{55}} & \textcolor{red}{\ding{55}} & Fixed Sub-network \\
DEL~\cite{zarch2025context} & \textcolor{red}{\ding{55}} & \textcolor{red}{\ding{55}} & \textcolor{ForestGreen}{\ding{51}} & \textcolor{red}{\ding{55}} & Dynamic Early-Exit \\
\midrule
ASD~\cite{metel2024fly} & \textcolor{ForestGreen}{\ding{51}} & \textcolor{ForestGreen}{\ding{51}} & \textcolor{ForestGreen}{\ding{51}} & \textcolor{red}{\ding{55}} & Rule-based Pruning \\
Draft \& Verify \cite{zhang2024draft} & \textcolor{ForestGreen}{\ding{51}} & \textcolor{ForestGreen}{\ding{51}} & \textcolor{red}{\ding{55}} & \textcolor{red}{\ding{55}} & Bayesian Optimization \\
SWIFT~\cite{xia2024swift} & \textcolor{ForestGreen}{\ding{51}} & \textcolor{ForestGreen}{\ding{51}} & \textcolor{red}{\ding{55}} & \textcolor{red}{\ding{55}} & Bayesian Optimization + Random Search \\
CLaSp \cite{chen2025clasp} & \textcolor{ForestGreen}{\ding{51}} & \textcolor{red}{\ding{55}} & \textcolor{ForestGreen}{\ding{51}} & \textcolor{red}{\ding{55}} & Dynamic Programming \\
\midrule
\rowcolor{gray!10} \textbf{\ours (Ours)} & \textcolor{ForestGreen}{\ding{51}} & \textcolor{ForestGreen}{\ding{51}} & \textcolor{ForestGreen}{\ding{51}} & \textcolor{ForestGreen}{\ding{51}} & \textbf{Knapsack Optimization} \\
\bottomrule
\end{tabular}
}
\end{table*}

Self-Speculative Decoding (SSD)~\citep{elhoushi2024layerskip,liu2024kangaroo,xia2024swift,chen2025clasp,zhang2024draft} mitigates these limitations by using a single model for drafting and verification.
% which performs both drafting and verification within a single model.
% The key observation is that 
A partial computation of the target model can already produce token predictions that closely approximate the full model.
% close to those of the full model
% , allowing such partial executions to serve as draft proposals while the full model is used for verification.
By sharing parameters and representations, SSD preserves distributional similarity and removes the need for an auxiliary model.
This approach avoids the high cost of training or distilling separate draft models.
Furthermore, SSD approaches that select a sub-network of the target model can significantly reduce verification overhead by reusing intermediate states, effectively streamlining the inference pipeline.

\subsection{Related Work}
LayerSkip~\cite{elhoushi2024layerskip} proposes a training-based SSD approach that enables early-exit inference by applying layer dropout and a shared early-exit loss for fine-tuning, allowing the model to exit at early layers for drafting. 
Similarly, Kangaroo~\cite{liu2024kangaroo} adopts a double early-exit strategy that utilizes a shallow sub-network of the target LLM; it requires a fine-tuned adapter module to bridge the gap between the sub-network and the full model.
% By reusing intermediate hidden states and key-value (KV) caches across decoding steps, LayerSkip demonstrates that partial model execution can be effectively leveraged for self-speculative decoding within a single model.
In contrast, several recent works focus on training-free, plug-and-play solutions that require no additional parameters or auxiliary training. 
Draft\&Verify~\cite{zhang2024draft} constructs a draft model by selectively skipping intermediate Transformer blocks and utilizes Bayesian optimization to search for the most effective skip configurations. Building on this, SWIFT \cite{xia2024swift} also employs Bayesian optimization to adaptively select layers to skip during inference, further demonstrating that LLMs exhibit task-specific layer sparsity that can be leveraged for acceleration.

Other training-free methods focus on context-aware adaptation, where the cosine similarity between hidden states plays a pivotal role as a proxy for drafting accuracy.
\citet{metel2024fly} proposes ASD, a simple layer-skip heuristic that adaptively searches draft configuration based on input context and cosine similarity. Similarly, DEL~\cite{zarch2025context} proposes a dynamic strategy that adjusts both the exit layer and the speculation length during each decoding round. Recently, CLaSp~\cite{chen2025clasp} employs an in-context layer-skipping strategy that uses a dynamic programming algorithm to optimize the draft configuration by leveraging the cosine similarity of hidden states from previous verification stages. All methods except for LayerSkip and Kangaroo are training-free and can be directly applied to any pre-trained LLM without further modification or auxiliary parameters.

In \cref{tab:method_comparison}, we summarize features of these methods as well as proposed \ours.

\subsection{Contributions}

In this paper, we propose \ours, a framework that reformulates self-speculative decoding (SSD) as a Knapsack Problem to maximize inference throughput. 
% We extend traditional block-level layer skipping by decoupling Attention and MLP layers, treating each as an independent candidate for the draft model. This allows \ours to dynamically adapt the draft configuration to the changing computational bottlenecks of long-context inference, specifically the linear growth of attention latency. By solving a two-stage knapsack optimization, our method identifies the most hardware-efficient sub-configurations that stay within a target latency budget while maximizing predictive accuracy.
Our contributions can be summarized as follows: 
\begin{itemize}[itemsep=2pt, topsep=2pt, parsep=0pt, partopsep=0pt, leftmargin=10pt]
    \item %\textbf{Module-level Decoupling:} 
    We extend traditional block-level layer skipping by decoupling Attention and MLP layers and
    % This flexibility allows the framework to bypass specific computational bottlenecks that prior works cannot address.
    % \item 
    % We are the first to explicitly 
    incorporate context length into the draft model selection process. By modeling the hardware-specific latency of attention as a function of sequence length, \ours maintains high speedups even as the context window expands.
    \item We reformulate draft model selection as a constrained optimization problem. Using a hardware-aware TPT (Tokens-per-Time) metric, \ours balances  the draft model's latency and the acceptance rate (\cref{sec:main}).
    \item While prior works use cosine similarity as an empirical heuristic, no formal theoretical connection has been established yet. We provide the first rigorous analysis demonstrating how cosine similarity serves as a reliable proxy for the token acceptance rate (\cref{sec:lmm-cos}).
    \item  %\ours is plug-and-play without fine-tuning or auxiliary parameters. 
    Experiments demonstrate that \ours consistently achieves superior wall-clock speedups compared to state-of-the-art training-free SSD baselines  (\cref{sec:experiments}).
\end{itemize}

% We revisit the optimization objective underlying adaptive SSD and propose a unified framework that directly targets decoding efficiency.
% Our approach formulates draft generation as a cost-aware decision problem that leverages information from recent decoding history, enabling more stable adaptation than token-local heuristics. %\gk{"token-local" appears twice, but I'm not sure what this is exactly. Is it a typically used term? Could you add a citation?}
% Crucially, we explicitly account for the asymmetric computational cost of attention and MLP operations, a distinction that becomes increasingly important in long-context decoding where attention dominates inference cost.
% By jointly balancing draft quality and expected computation cost, our method dynamically allocates computation during drafting and achieves larger speedups than prior self-speculative approaches while maintaining generation quality.

%% file: table1.tex
\begin{table*}[t]
% \vspace{-0.06in}
\centering
\caption{Performance comparison of reasoning tasks (long-context generation) and summarization tasks (long-context input). 
We use Qwen3 models for reasoning (left) and Llama3 models for summarization (right) with various scales. We report Tokens-per-Time (TPT), wall-clock speedup, and the average acceptance rate $\alpha$ for speculative decoding methods.}\label{tab:greedy_throughput}
\vspace{-0.05in}
% \scalebox{0.72}{
\resizebox{\textwidth}{!}{
\setlength{\tabcolsep}{3.5pt}
\renewcommand{\arraystretch}{1.05}
\begin{tabular}{l ccc ccc ccc c@{\hskip 10pt} ccc ccc ccc}
\toprule
\multirow{4}{*}{Methods} & \multicolumn{9}{c}{\textbf{Reasoning Tasks (Long-context Generation)}} & & \multicolumn{9}{c}{\textbf{Summarization Tasks (Long-context Input)}} \\
\cmidrule(lr){2-10} \cmidrule(lr){12-20}
& \multicolumn{3}{c}{AIME24} & \multicolumn{3}{c}{AIME25} & \multicolumn{3}{c}{MMLU-Pro} & & \multicolumn{3}{c}{GovReport} & \multicolumn{3}{c}{PG19} & \multicolumn{3}{c}{BookSum} \\
\cmidrule(lr){2-4} \cmidrule(lr){5-7} \cmidrule(lr){8-10} \cmidrule(lr){12-14} \cmidrule(lr){15-17} \cmidrule(lr){18-20}
& TPT & Speedup & $\alpha$ & TPT & Speedup & $\alpha$ & TPT & Speedup & $\alpha$ & & TPT & Speedup & $\alpha$ & TPT & Speedup & $\alpha$ & TPT & Speedup & $\alpha$ \\
\cmidrule(lr){1-10} \cmidrule(lr){12-20} 

% Block 1: 32B / 70B
& \multicolumn{9}{c}{\cellcolor{blue!10}{Qwen3-32B ($L=64$ layers)}} & & \multicolumn{9}{c}{\cellcolor{red!10}{Llama3.1-70B} ($L=80$ layers)} \\
AR      & 19.65 & 1.00$\times$ & $-$    & 19.72 & 1.00$\times$ & $-$    & 23.15 & 1.00$\times$ & $-$ & & 6.75  & 1.00$\times$ & $-$     & 6.90  & 1.00$\times$ & $-$     & 4.76  & 1.00$\times$ & $-$     \\
DEL     & 19.69 & 0.85$\times$ & 0\%  & 19.95 & 0.88$\times$ & 2.8\%& 23.20 & 0.87$\times$ & 0\%  & & 6.62  & 0.87$\times$ & 0.1\%& 6.99  & 0.87$\times$ & 1.3\%& 4.84  & 0.87$\times$ & 0.0\%\\
SWIFT   & 20.38 & 1.23$\times$ & 84.6\% & 20.73 & 1.22$\times$ & 80.4\% & 21.75 & 0.90$\times$ & 46.8\% & & 8.62  & 1.33$\times$ & 91.6\% & 6.12  & 0.98$\times$ & 61.9\% & 3.76  & 0.84$\times$ & 49.3\% \\
CLaSp   & 21.65 & 1.30$\times$ & 96.1\% & 21.30 & 1.29$\times$ & 95.1\% & 23.90 & 1.09$\times$ & 92.0\% & & 6.82  & 1.22$\times$ & 93.4\% & 6.52  & 1.10$\times$ & 87.8\% & 4.57  & 1.01$\times$ & 81.3\% \\
\cellcolor{gray!10}\ours & \cellcolor{gray!10}31.06 & \cellcolor{gray!10}\textbf{1.43$\times$} & \cellcolor{gray!10}93.5\% & \cellcolor{gray!10}32.00 & \cellcolor{gray!10}\textbf{1.42$\times$} & \cellcolor{gray!10}93.1\% & \cellcolor{gray!10}34.62 & \cellcolor{gray!10}\textbf{1.20$\times$} & \cellcolor{gray!10}85.1\% & & \cellcolor{gray!10}13.00 & \cellcolor{gray!10}\textbf{1.47$\times$} & \cellcolor{gray!10}94.4\% & \cellcolor{gray!10}10.67 & \cellcolor{gray!10}\textbf{1.21$\times$} & \cellcolor{gray!10}89.5\% & \cellcolor{gray!10}6.99 & \cellcolor{gray!10}\textbf{1.14$\times$} & \cellcolor{gray!10}84.0\% \\
\midrule

% Block 2: 14B / 8B
& \multicolumn{9}{c}{\cellcolor{blue!10}{Qwen3-14B ($L=40$ layers)}} & & \multicolumn{9}{c}{\cellcolor{red!10}{Llama3.1-8B ($L=32$ layers)}} \\
AR      & 22.45 & 1.00$\times$ & -    & 23.17 & 1.00$\times$ & -    & 24.06 & 1.00$\times$ & -    & & 15.71 & 1.00$\times$ & -    & 16.11 & 1.00$\times$ & -    & 10.77 & 1.00$\times$ & -    \\
DEL     & 22.48 & 0.83$\times$ & 0.1\%& 23.25 & 0.83$\times$ & 0\%  & 24.61 & 0.85$\times$ & 17.2\% & & 15.72 & 0.74$\times$ & 0.1\%& 16.12 & 0.71$\times$ & 0.1\%& 11.49 & 0.74$\times$ & 0.1\%\\
SWIFT   & 22.38 & 1.14$\times$ & 84.4\% & 21.86 & 1.20$\times$ & 85.4\% & 13.67 & 0.74$\times$ & 53.6\% & & 13.72 & 1.05$\times$ & 62.1\% & 12.43 & 0.83$\times$ & 47.7\% & 8.25  & 0.72$\times$ & 31.6\% \\
CLaSp   & 21.14 & 1.19$\times$ & 92.3\% & 21.42 & 1.20$\times$ & 92.5\% & 15.82 & 1.08$\times$ & 87.0\% & & 14.15 & 1.11$\times$ & 91.7\% & 13.94 & 1.10$\times$ & 91.4\% & 8.79  & 1.05$\times$ & 85.3\% \\
\cellcolor{gray!10}\ours & \cellcolor{gray!10}30.53 & \cellcolor{gray!10}\textbf{1.30$\times$} & \cellcolor{gray!10}86.1\% & \cellcolor{gray!10}31.06 & \cellcolor{gray!10}\textbf{1.27$\times$} & \cellcolor{gray!10}88.9\% & \cellcolor{gray!10}35.51 & \cellcolor{gray!10}\textbf{1.16$\times$} & \cellcolor{gray!10}88.0\% & & \cellcolor{gray!10}25.00 & \cellcolor{gray!10}\textbf{1.28$\times$} & \cellcolor{gray!10}97.0\% & \cellcolor{gray!10}23.91 & \cellcolor{gray!10}\textbf{1.26$\times$} & \cellcolor{gray!10}94.6\% & \cellcolor{gray!10}15.48 & \cellcolor{gray!10}\textbf{1.18$\times$} & \cellcolor{gray!10}91.5\% \\
\midrule

% Block 3: 8B / 3B
& \multicolumn{9}{c}{\cellcolor{blue!10}{Qwen3-8B ($L=36$ layers)}} & & \multicolumn{9}{c}{\cellcolor{red!10}{Llama3.2-3B ($L=28$ layers)}} \\
AR      & 25.15 & 1.00$\times$ & $-$    & 28.66 & 1.00 & $-$    & 24.97 & 1.00 & $-$    & & 22.58 & 1.00 & $-$& 23.54 & 1.00 & $-$    & 15.71 & 1.00 & $-$    \\
DEL     & 22.96 & 0.82$\times$ & 0\% & 28.71 & 0.83$\times$ & 0\% & 27.91 & 0.88$\times$ & 23.9\% & & 23.72 & 0.84$\times$ & 36.0\% & 25.62 & 0.68$\times$ & 15.5\% & 16.37 & 0.75$\times$ & 20.1\% \\
SWIFT   & 20.60 & 1.12$\times$ & 75.0\% & 21.65 & 0.90$\times$ & 71.0\% & 20.32 & 1.12$\times$ & 93.0\% & & 18.45 & 1.08$\times$ & 64.0\% & 18.82 & 0.83$\times$ & 53.6\% & 11.09 & 0.89$\times$ & 42.1\% \\
CLaSp   & 21.84 & 1.14$\times$ & 92.4\% & 23.94 & 1.15$\times$ & 90.5\% & 22.75 & 1.16$\times$ & 90.8\% & & 18.91 & 1.08$\times$ & 85.7\% & 21.10 & 1.08$\times$ & 91.0\% & 12.22 & 1.06$\times$ & 82.3\% \\
\cellcolor{gray!10}\ours & \cellcolor{gray!10}33.47 & \cellcolor{gray!10}\textbf{1.28$\times$} & \cellcolor{gray!10}89.8\% & \cellcolor{gray!10}36.81 & \cellcolor{gray!10}\textbf{1.26$\times$} & \cellcolor{gray!10}88.5\% & \cellcolor{gray!10}40.01 & \cellcolor{gray!10}\textbf{1.33$\times$} & \cellcolor{gray!10}91.2\% & & \cellcolor{gray!10}36.47 & \cellcolor{gray!10}\textbf{1.24$\times$} & \cellcolor{gray!10}95.2\% & \cellcolor{gray!10}34.04 & \cellcolor{gray!10}\textbf{1.16$\times$} & \cellcolor{gray!10}92.1\% & \cellcolor{gray!10}23.48 & \cellcolor{gray!10}\textbf{1.18$\times$} & \cellcolor{gray!10}87.0\% \\
\midrule

% Block 4: 4B / 1B
& \multicolumn{9}{c}{\cellcolor{blue!10}{Qwen3-4B ($L=36$ layers)}} & & \multicolumn{9}{c}{\cellcolor{red!10}{Llama3.2-1B ($L=16$ layers)}} \\
AR     & 34.78 & 1.00$\times$ & $-$ & 36.85 & 1.00$\times$ & $-$ & 30.93 & 1.00$\times$ & $-$ & & 46.56 & 1.00$\times$ & $-$ & 46.68 & 1.00$\times$ & $-$ & 30.43 & 1.00$\times$ & $-$ \\
DEL     & 40.77 & 0.98$\times$ & 44.9\% & 40.60 & 0.97$\times$ & 22.3\% & 38.36 & 0.99$\times$ & 66.1\%   & & 49.31 & 0.89$\times$ & 55.6\% & 49.98 & 0.77$\times$ & 32.9\% & 32.97 & 0.76$\times$ & 31.4\% \\
SWIFT   & 27.54 & 0.88$\times$ & 80.3\% & 30.23 & 1.11$\times$ & 89.0\% & 23.74 & 1.00$\times$ & 7.50\%   & & 31.32 & 1.03$\times$ & 83.8\% & 31.34 & 0.91$\times$ & 71.5\% & 21.31 & 0.96$\times$ & 69.5\% \\
CLaSp   & 29.68 & 1.15$\times$ & 90.9\% & 32.26 & 1.16$\times$ & 91.2\% & 25.68 & 1.15$\times$ & 89.9\%   & & 33.75 & 1.02$\times$ & 83.6\% & 33.61 & 1.01$\times$ & 81.7\% & 23.42 & 1.04$\times$ & 82.8\% \\
\cellcolor{gray!10}\ours & \cellcolor{gray!10}46.20  & \cellcolor{gray!10}\textbf{1.36}$\times$ & \cellcolor{gray!10}87.9\% & \cellcolor{gray!10}49.96 & \cellcolor{gray!10}\textbf{1.35}$\times$ & \cellcolor{gray!10}89.3\% & \cellcolor{gray!10}45.25 & \cellcolor{gray!10}\textbf{1.29}$\times$ & \cellcolor{gray!10}90.2\%  & & \cellcolor{gray!10}58.79 & \cellcolor{gray!10}\textbf{1.11}$\times$ & \cellcolor{gray!10}96.1\% & \cellcolor{gray!10}58.17 & \cellcolor{gray!10}\textbf{1.06}$\times$ & \cellcolor{gray!10}91.7\% & \cellcolor{gray!10}38.23 & \cellcolor{gray!10}\textbf{1.13}$\times$ & \cellcolor{gray!10}91.2\% \\
\bottomrule
\end{tabular}
}
\vspace{-0.1in}
\end{table*}

%% file: table2.tex
\begin{table*}[t]
\centering
\caption{Performance comparison under nucleus sampling with a temperature $T=0.7$, and top-$k$ and top-$p$ thresholds set to $k=50$ and $p=0.95$, respectively. We use Qwen3-8B model for reasoning (left) and Llama3.1-8B model for summarization (right). We report Tokens-per-Time (TPT), wall-clock speedup, and the average acceptance rate $\alpha$ for speculative decoding methods.} \label{tab:sampling_throughput}
\vspace{-0.05in}
% \scalebox{0.72}{
\resizebox{\textwidth}{!}{
\setlength{\tabcolsep}{3.5pt}
\renewcommand{\arraystretch}{1.05}
\begin{tabular}{l ccc ccc ccc c@{\hskip 10pt} ccc ccc ccc}
\toprule
\multirow{4}{*}{Methods} & \multicolumn{9}{c}{\textbf{Reasoning Tasks (Qwen3-8B)}} & & \multicolumn{9}{c}{\textbf{Summarization Tasks (Llama3.1-8B)}} \\
\cmidrule(lr){2-10} \cmidrule(lr){12-20}
& \multicolumn{3}{c}{AIME24} & \multicolumn{3}{c}{AIME25} & \multicolumn{3}{c}{MMLU-Pro} & & \multicolumn{3}{c}{GovReport} & \multicolumn{3}{c}{PG19} & \multicolumn{3}{c}{BookSum} \\
\cmidrule(lr){2-4} \cmidrule(lr){5-7} \cmidrule(lr){8-10} \cmidrule(lr){12-14} \cmidrule(lr){15-17} \cmidrule(lr){18-20}
& TPT & Speedup & $\alpha$ & TPT & Speedup & $\alpha$ & TPT & Speedup & $\alpha$ & & TPT & Speedup & $\alpha$ & TPT & Speedup & $\alpha$ & TPT & Speedup & $\alpha$ \\
\cmidrule(lr){1-10} \cmidrule(lr){12-20} 

AR      & 31.87 & 1.00$\times$ & $-$    & 32.58 & 1.00$\times$ & $-$    & 33.52 & 1.00$\times$ & $-$    & & 19.71 & 1.00$\times$ & $-$    & 20.64 & 1.00$\times$ & $-$    & 14.02 & 1.00$\times$ & $-$    \\
DEL     & 31.29 & 0.86$\times$ & 0.0\%  & 37.89 & 0.82$\times$ & 0.0\%  & 25.58 & 0.83$\times$ & 22.2\% & & 20.01 & 0.63$\times$ & 0.0\%  & 20.64 & 0.67$\times$ & 0.1\%  & 13.91 & 0.69$\times$ & 0.0\%  \\
SWIFT   & 24.46 & 1.03$\times$ & 73.3\% & 27.15 & 1.02$\times$ & 78.1\% & 20.72 & 1.11$\times$ & 89.3\% & & 16.28 & 0.87$\times$ & 48.2\% & 14.38 & 0.52$\times$ & 24.7\% & 12.78 & 0.47$\times$ & 22.4\% \\
CLaSp   & 24.08 & 1.06$\times$ & 83.1\% & 27.62 & 1.07$\times$ & 91.6\% & 24.68 & 1.09$\times$ & 91.4\% & & 16.98 & 1.07$\times$ & 86.0\% & 15.83 & 1.02$\times$ & 82.3\% & 9.03  & 1.02$\times$ & 78.0\% \\
\cellcolor{gray!10}\ours & \cellcolor{gray!10}47.66 & \cellcolor{gray!10}\textbf{1.13$\times$} & \cellcolor{gray!10}79.3\% & \cellcolor{gray!10}49.96 & \cellcolor{gray!10}\textbf{1.10$\times$} & \cellcolor{gray!10}87.9\% & \cellcolor{gray!10}49.89 & \cellcolor{gray!10}\textbf{1.19$\times$} & \cellcolor{gray!10}89.6\% & & \cellcolor{gray!10}32.16 & \cellcolor{gray!10}\textbf{1.23$\times$} & \cellcolor{gray!10}93.6\% & \cellcolor{gray!10}28.59 & \cellcolor{gray!10}\textbf{1.07$\times$} & \cellcolor{gray!10}83.8\% & \cellcolor{gray!10}17.56 & \cellcolor{gray!10}\textbf{1.08$\times$} & \cellcolor{gray!10}83.9\% \\
\bottomrule
\end{tabular}
}
\vspace{-0.1in}
\end{table*}